\newtheorem{definition}{Definition}
\newtheorem{lemma}{Lemma}
\newtheorem{claim}{Claim}
\def\cast{{
   \mathord{
      \hbox to 0em{
         \ooalign{
	   \smash{\hbox{$\ast$}}\crcr
	   \smash{\hskip-1pt\Large\hbox{$\circ$}} }
	 \hidewidth}
      \phantom{\bigcirc}
} }}
\def\bm#1{\mbox{\boldmath $#1$}}
\newcommand{\vpsi}{\mbox{$\bm \psi$}}
\newcommand{\bds}{\begin {itemize}}
\newcommand{\eds}{\end {itemize}}
\newcommand{\bdf}{\begin{definition}}
\newcommand{\blm}{\begin{lemma}}
\newcommand{\edf}{\end{definition}}
\newcommand{\elm}{\end{lemma}}
\newcommand{\bthm}{\begin{theorem}}
\newcommand{\ethm}{\end{theorem}}
\newcommand{\bprp}{\begin{prop}}
\newcommand{\eprp}{\end{prop}}
\newcommand{\bcl}{\begin{claim}}
\newcommand{\ecl}{\end{claim}}
\newcommand{\bcr}{\begin{coro}}
\newcommand{\ecr}{\end{coro}}
\newcommand{\bquest}{\begin{question}}
\newcommand{\equest}{\end{question}}
\newcommand{\larrow}{{\larrow}}
\newcommand{\argmin}{\ensuremath{\mathrm{arg}\min}}
\newcommand{\argmax}{\ensuremath{\mathrm{arg}\max}}
\newcommand{\cG}{{\ensuremath{\mathcal{G}}}}
\newcommand{\vu}{{\ensuremath{{\mathbf{u}}}}}
\newcommand{\mA}{{\ensuremath{\mathbf{A}}}}
\newcommand{\mD}{{\ensuremath{\mathbf{D}}}}
\newcommand{\mL}{{\ensuremath{\mathbf{L}}}}
\def\IC{\mathbb C}
\def\IN{\mathbb N}
\def\IZ{\mathbb Z}
\def\IR{\mathbb R}
\def\shat{^{\mathchoice{}{}%
 {\,\,\smash{\hbox{\lower4pt\hbox{$\widehat{\null}$}}}}%
 {\,\smash{\hbox{\lower3pt\hbox{$\hat{\null}$}}}}}}
\def\bSigma{{
      \ooalign{
      \smash{\hskip.4pt\raise.4pt\hbox{$\Sigma$}}\vphantom{}\crcr
      \smash{\hskip.7pt\raise.6pt\hbox{$\Sigma$}}\vphantom{}\crcr
      \smash{\hbox{$\Sigma$}}\vphantom{$\Sigma$}}
      \vphantom{\hbox{$\Sigma$}}
      }}
\def\bTheta{{
      \ooalign{
      \smash{\hskip.5pt\raise.5pt\hbox{$\Theta$}}\vphantom{}\crcr
      \smash{\hskip.0pt\raise.1pt\hbox{$\Theta$}}\vphantom{}\crcr
      \smash{\hbox{$\Theta$}}\vphantom{$\Theta$}}
      \vphantom{\hbox{$\Theta$}}
      }}
\def\bDelta{{
      \ooalign{
      \smash{\hskip.4pt\raise.4pt\hbox{$\Delta$}}\vphantom{}\crcr
      \smash{\hskip.7pt\raise.6pt\hbox{$\Delta$}}\vphantom{}\crcr
      \smash{\hbox{$\Delta$}}\vphantom{$\Delta$}}
      \vphantom{\hbox{$\Delta$}}
      }}
\def\bLambda{{
      \ooalign{
      \smash{\hskip.5pt\raise.5pt\hbox{$\Lambda$}}\vphantom{}\crcr
      \smash{\hskip.0pt\raise.1pt\hbox{$\Lambda$}}\vphantom{}\crcr
      \smash{\hbox{$\Lambda$}}\vphantom{$\Lambda$}}
      \vphantom{\hbox{$\Lambda$}}
      }}
\def\bordermatrix#1{\begingroup \m@th
  \@tempdima 8.75\p@
  \setbox\z@\vbox{%
    \def\cr{\crcr\noalign{\kern2\p@\global\let\cr\endline}}%
    \ialign{$##$\hfil\kern2\p@\kern\@tempdima&\thinspace\hfil$##$\hfil
      &&\quad\hfil$##$\hfil\crcr
      \omit\strut\hfil\crcr\noalign{\kern-\baselineskip}%
      #1\crcr\omit\strut\cr}}%
  \setbox\tw@\vbox{\unvcopy\z@\global\setbox\@ne\lastbox}%
  \setbox\tw@\hbox{\unhbox\@ne\unskip\global\setbox\@ne\lastbox}%
  \setbox\tw@\hbox{$\kern\wd\@ne\kern-\@tempdima\left[\kern-\wd\@ne
    \global\setbox\@ne\vbox{\box\@ne\kern2\p@}%
    \vcenter{\kern-\ht\@ne\unvbox\z@\kern-\baselineskip}\,\right]$}%
  \null\;\vbox{\kern\ht\@ne\box\tw@}\endgroup}
\def\argmin{\mathop{\operator@font arg\,min}}
\def\argmax{\mathop{\operator@font arg\,max}}
\newcommand{\Tr}{\mbox{\rm Tr}}
\def\bm#1{\mbox{\boldmath $#1$}}
\newcommand{\bea}{\begin{array}}
\newcommand{\ena}{\end{array}}
\newcommand{\beq}{\begin{equation}}
\newcommand{\enq}{\end{equation}}
\newcommand{\beqa}{\begin{eqnarray}}
\newcommand{\enqa}{\end{eqnarray}}
\newcommand{\beqan}{\begin{eqnarray*}}
\newcommand{\enqan}{\end{eqnarray*}}
\newcommand{\AL}{\begin{enumerate}}
\newcommand{\ALE}{\end{enumerate}}
\def\addots{\mathinner{
    \mkern1mu\raise0pt\vbox{\kern7pt\hbox{.}}
    \mkern2mu\raise4pt\hbox{.}
    \mkern2mu\raise7pt\hbox{.}
    \mkern1mu}}
\def\sddots{\mathinner{
    \mkern.8mu\raise7pt\hbox{.}
    \mkern.8mu\raise4pt\hbox{.}
    \mkern.8mu\raise0pt\vbox{\kern7pt\hbox{.}}
    \mkern1mu}}
\def\saddots{\mathinner{
    \mkern.2mu\raise0pt\vbox{\kern7pt\hbox{.}}
    \mkern.2mu\raise4pt\hbox{.}
    \mkern.2mu\raise7pt\hbox{.}
    \mkern1mu}}
\def\sqplus{\mathbin{
	{\ooalign{\hfil\raise.3ex\hbox{\scriptsize
	+}\hfil\crcr\mathhexbox274\crcr\mathhexbox275}}
	}} 
\def\sqminus{\mathbin{
	{\ooalign{\hfil\raise.3ex\hbox{\scriptsize
	--}\hfil\crcr\mathhexbox274\crcr\mathhexbox275}}
	}}
\def\IC{{
   \mathord{
      \hbox to 0em{
	 \hskip-4pt
         \ooalign{
	   \smash{\hskip1.9pt\raise2.6pt\hbox{$\scriptscriptstyle |$}}\crcr
	   \smash{\hbox{\rm\sf C}} }
	 \hidewidth}
      \phantom{\hbox{\rm\sf C}}
} }}
\def\IN{
    {\ooalign{
   \smash{\hskip2.2pt\raise1.5pt\hbox{$\scriptscriptstyle |$}}\vphantom{}\crcr
   \hbox{\sf N}
	}}
	} 
\def\IZ{
    {\ooalign{
   \smash{\hskip1.9pt\raise0pt\hbox{$\sf Z$}}\vphantom{}\crcr
   \hbox{\sf Z}
	}}
	} 
\def\IR{
    {\ooalign{
   \smash{\hskip2.2pt\raise1.5pt\hbox{$\scriptscriptstyle |$}}\vphantom{}\crcr
   \smash{\hskip2.2pt\raise3.3pt\hbox{$\scriptscriptstyle |$}}\vphantom{}\crcr
   \hbox{\sf R}
	}}
	} 
\DeclareMathAlphabet{\mathcmb}{OT1}{cmr}{b}{n}
\def\bSigma{\ensuremath{\mathcmb{\Sigma}}}
\def\bLambda{\ensuremath{\mathcmb{\Lambda}}}
\def\bTheta{\ensuremath{\mathcmb{\Theta}}}
\newcommand{\SI}{\begin{indlist}}
\newcommand{\EI}{\end{indlist}}
\newcommand{\DL}{\begin{dashlist}}
\newcommand{\DLE}{\end{dashlist}}
\def\setboxz@h{\setbox\z@\hbox}
\def\wdz@{\wd\z@}
\def\boxz@{\box\z@}
\def\underset#1#2{\binrel@{#2}%
  \binrel@@{\mathop{\kern\z@#2}\limits_{#1}}}
\def\binrel@#1{\begingroup
  \setboxz@h{\thinmuskip0mu
    \medmuskip\m@ne mu\thickmuskip\@ne mu
    \setbox\tw@\hbox{$#1\m@th$}\kern-\wd\tw@
    ${}#1{}\m@th$}%
  \edef\@tempa{\endgroup\let\noexpand\binrel@@
    \ifdim\wdz@<\z@ \mathbin
    \else\ifdim\wdz@>\z@ \mathrel
    \else \relax\fi\fi}%
  \@tempa
}
\let\binrel@@\relax%
\title{A Graph Laplacian Eigenvector-based Pre-training Method for Graph Neural Networks}
\author{
\textbf{Howard Dai}$^{1,*}$ \quad
\textbf{Nyambura Njenga}$^{2,*}$ \quad
\textbf{Hiren Madhu}$^{1}$ \quad
\textbf{Siddharth Viswanath}$^{1}$ \quad \\
\textbf{Ryan Pellico}$^{3,\dagger}$ \quad
\textbf{Ian Adelstein}$^{1,\dagger}$ \quad
\textbf{Smita Krishnaswamy}$^{1,\dagger}$ \\
\\
$^1$Yale University \quad
$^2$Georgia Institute of Technology \quad
$^3$Trinity College \\
$^*$co-first authors \quad
$^\dagger$co-senior authors \\
\textit{Correspondence}: \texttt{smita.krishnaswamy@yale.edu}
}
\begin{document}

\maketitle

\begin{abstract}
The development of self-supervised graph pre-training methods is a crucial ingredient in recent efforts to design robust graph foundation models (GFMs). \textit{Structure-based} pre-training methods are under-explored yet crucial for downstream applications which rely on underlying graph structure. In addition, pre-training traditional message passing GNNs to capture global and regional structure is often challenging due to the risk of oversmoothing as network depth increases. We address these gaps by proposing the Laplacian Eigenvector Learning Module (LELM), a novel pre-training module for graph neural networks (GNNs) based on predicting the low-frequency eigenvectors of the graph Laplacian. Moreover, LELM introduces a novel architecture that overcomes oversmoothing, allowing the GNN model to learn long-range interdependencies. Empirically, we show that models pre-trained via our framework outperform baseline models on downstream molecular property prediction tasks.
\end{abstract}

\section{Introduction}
Graph Neural Networks (GNNs) have become a powerful tool in node and graph representation learning, with successful applications across domains ranging from biomedicine \citep{canturk2023graph, yan2024empowering} to social networks \citep{fan2019graph}. More recently, graph foundation models (GFMs) are emerging as an exciting field; inspired by the success of large language models (LLMs), researchers are exploring the possibility of creating large graph-based models with emergent capabilities across many domains \citep{liu2025graph, xie2022self, wang2025graph}. 

A key ingredient in this effort is the creation of self-supervised tasks which can be performed on large unlabeled graph datasets \citep{liu2022graph}. A variety of pre-training methods have been proposed, with the majority of such methods being feature-based, including contrastive losses and feature reconstruction \citep{liu2025graph, xie2022self}. A few structure-based methods, which precompute labels based on graph topology, have been proposed \citep{peng2020self, hwang2020self}. However, this category of pre-training approach, recently termed \textit{graph property prediction} \citep{liu2025graph}, remains under-explored. Furthermore, limitations of GNNs in capturing long-range structural information caused by oversmoothing and underreaching \citep{cai2020note, oono2019graph, keriven2022not} pose challenges when approaching graph property prediction tasks which rely on global structure.


We propose the Laplacian Eigenvector Learning Module (LELM) for GNNs and GFMs. The low-frequency Laplacian eigenvectors capture a range of global, regional, and local graph structure, making them well-suited as a structural target. LELM utilizes a novel global MLP prediction head during pre-training that allows the GNN model to learn long-range relationships without requiring excessively deep networks, and it augments pre-training data with positional features to overcome expressivity limits of GNNs. LELM is highly flexible: it can be used to pre-train any GNN to improve downstream performance across all kinds of graph-based datasets, and it can be used either as an independent pre-training method or as a plug-in addition to existing graph pre-training pipelines. 


Our main contributions are as follows:

\begin{enumerate}[leftmargin=*,left=0pt..1em,topsep=0pt,itemsep=0pt]
    \item We propose LELM, a Laplacian eigenvector-based pre-training module for GNNs and GFMs. 
    \item We introduce a novel MLP head that enables GNNs to capture large-scale structure. Furthermore, we propose a set of augmented node features based on the graph diffusion operator.
    \item We empirically demonstrate that LELM provides performance improvements over baseline models. 
\end{enumerate}

\section{Background} \label{notation}
Given a graph $\cG = (V, E)$ with unnormalized adjacency matrix $\mA$ and degree matrix $\mD$, the unnormalized Laplacian $\mL$ of a graph $\cG$ is defined as $\mL = \mD - \mA$.

Let $\lambda_1, \lambda_2, \dots, \lambda_k$ denote the $k$ lowest eigenvalues of $\mL$ in nondecreasing order. Let $\psi_1, \psi_2, \dots \psi_k$ denote the corresponding eigenvectors, such that we have:
$$\mL \vpsi_i = \lambda_i \vpsi_i$$

The low-frequency graph Laplacian eigenvectors have been used for positional encodings, spectral GNNs, spectral clustering, and to generate provably minimal graph cuts. We provide examples and references for each of these applications in Section \ref{sec: applications}.

\section{Method}
\begin{figure}[t]
	\centering
	\includegraphics[width=1\linewidth]{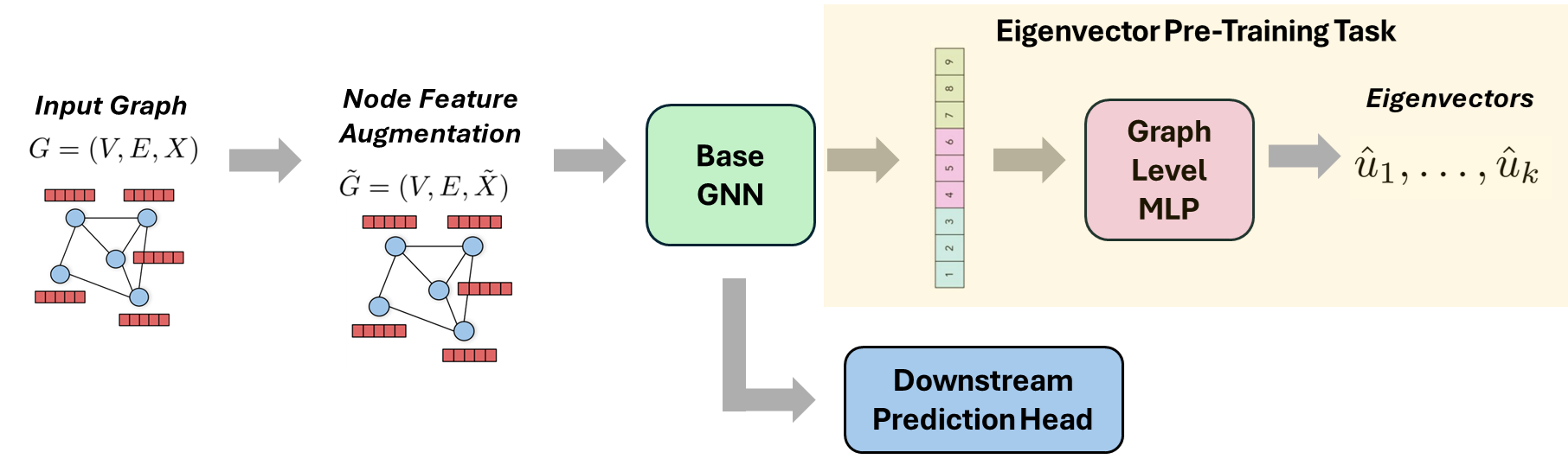}
	\caption{Overview of the LELM pre-training pipeline. Here, ``Base GNN" and ``Downstream Prediction Head" can be any user-defined model architecture. }
	\label{fig:sample}
\end{figure}

The LELM pre-training framework consists of three primary components:
\begin{itemize}[leftmargin=*,left=0pt..1em,topsep=0pt,itemsep=0pt]
    \item \textbf{Node feature augmentation:} We provide initial features based on the graph diffusion operator. These embeddings provide the base GNN model with additional structural context which is useful for the pre-training task as well as for downstream structure-aware tasks.
    \item \textbf{Graph-level MLP:} We pass a graph-level aggregated representation from a base GNN into our prediction MLP head.  The graph-level MLP overcomes the challenge of oversmoothing by allowing the base GNN to learn long-range node interactions without requiring an excessively deep message passing network.
    \item \textbf{Eigenvector prediction:} During pre-training, we task the model to predict the $k$ lowest-frequency eigenvectors of the graph Laplacian.  
\end{itemize}
 Together, these components allow the base GNN to learn local and global graph structure while overcoming intrinsic expressivity limitations, making LELM a robust pre-training framework for structure-aware downstream tasks. A visual overview of our pipeline can be seen in Figure \ref{fig:sample}, and a detailed algorithmic overview of our pipeline can be found in Section \ref{sec: Algorithms}.

\subsection{Node feature augmentation}
We propose two kinds of embeddings: \textbf{(1) wavelet positional embeddings}, which encode relative positional information between nodes, and \textbf{(2) diffused dirac embeddings}, which encode local connectivity structures around each node. Both embeddings use the graph diffusion operator, and capture local aggregate information on each node.  We provide more details as well as theoretical guarantees in Section \ref{sec: NodeFeatures}.

\subsection{Graph-level MLP} \label{sec: ModelArchitecture}
Once we perform node augmentation, we pass the augmented graph as input into a base GNN model and graph-level MLP to output predicted eigenvectors.\\\\
\textbf{Base GNN:} The base GNN model takes in a graph with augmented node features and generates learned node representations via neighborhood message passing and update steps.  Any GNN architecture may be selected as the base model to fit the needs of the dataset and downstream application.

\textbf{Graph-level MLP:} We concatenate the node-wise output of the base GNN model to form a graph-level aggregated representation.  We then pass the aggregated vector through an MLP model to produce the low-frequency Laplacian eigenvectors.  

Eigenvector-learning methods for other applications (such as spectral clustering) use a node-wise MLP head, processing each node's eigencoordinates independently based on their learned hidden embedding \citep{shaham2018spectralnet, dwivedi2021graph, canturk2023graph}.  Such methods fail to address oversmoothing as a node-wise MLP cannot learn long-range interactions; instead, one must use several layers of message passing within the base GNN \citep{canturk2023graph}.

\subsection{Eigenvector prediction} \label{sec: LossFunctions}

During pre-training, the model output aims to minimize a weighted sum of two loss functions: \textbf{(1) eigenvector loss} and \textbf{(2) energy loss}. 

To ensure the model does not output $k$ copies of the trivial eigenvector, we impose orthogonality on the final outputs of the model via QR decomposition, as proposed by \citet{shaham2018spectralnet}. Let each $\hat{\vu}_i$ denote the $i$th predicted eigenvector via LELM after orthogonality has been imposed. \\

\textbf{Energy loss}, used in \cite{shaham2018spectralnet, dwivedi2021graph, ma2023self}, computes the sum of Rayleigh quotients:
\[\mathcal{L}_{energy} = \frac{1}{k} \sum_{i=1}^k \hat{\vu}_i^\top \mL\hat{\vu}_i  \]
Energy loss is minimized when the predicted eigenvectors span the same subspace as the ground-truth eigenvectors; to enforce a strict ordering and basis on the predicted eigenvectors, we additionally impose \textbf{eigenvector loss}, used in \cite{mishne2019diffusion}: 
\[\mathcal{L}_{eigvec} = \frac{1}{k} \sum_{i=1}^k \lVert \mL\hat{\vu}_i - \lambda_i \hat{\vu}_i \rVert \]

We discuss further intuition behind our loss functions in Section \ref{sec: LossFunctionDetails} and provide proofs for necessary sign and basis invariances in Section \ref{sec: lossfn proofs}.

\renewcommand{\algorithmicrequire}{\textbf{Input:}}
\renewcommand{\algorithmicensure}{\textbf{Output:}}

\section{Experimental Results} \label{sec: standalone}
To evaluate the effectiveness of LELM, we conduct various experiments on real-world datasets.


\textbf{Comparison against baseline models:} We pre-train a standard Graph Isomorphism Network (GIN) \citep{GIN} and GPS \citep{rampavsek2022recipe}, a graph transformer, using LELM.  Once the model has been pre-trained, we replace the graph-level MLP head with a downstream prediction MLP and fine-tune model weights.  We evaluate our pre-training framework on three molecular datasets ZINC, ZINC-12k \citep{sterling2015zinc} and QM9 \citep{ramakrishnan2014quantum}. For each of these models, we compare LELM against the same GNN model without pre-training.  For both models, pre-training improves performance for all but one of the downstream targets. We record results of our experiments in Table \ref{tab: zinc}.
\begin{table} 
\caption{Test MAE ($\downarrow$) performance comparison on ZINC (single metric) and QM9 (first seven target properties).}
\label{tab: zinc}
\centering
\begin{adjustbox}{max width=\linewidth}
\begin{tabular}{l c c c ccccccc}
\toprule
 &  \multicolumn{1}{c}{ZINC full} & \multicolumn{1}{c}{ZINC subset} & \multicolumn{7}{c}{QM9} \\
 \cmidrule(lr){2-2} \cmidrule(lr){3-3} \cmidrule(lr){4-10}
Model   & Penalized $\log p$ & Penalized $\log p$ & $\mu$ & $\alpha$ & $\varepsilon_{\text{HOMO}}$ & $\varepsilon_{\text{LUMO}}$ & $\Delta_\varepsilon$  & $\{R^2\}$ & ZPVE\\
\midrule

\rowcolor{lightgray!50} GIN + LELM  &0.130 & 0.353 & 0.484 & \textbf{0.489} & \textbf{0.00353} & \textbf{0.00371} & 0.00513 & \textbf{28.103} & \textbf{0.00048}\\

GIN (baseline) & 0.228 & 0.438 & 0.472 & 1.132 & 0.00386 & 0.00399 & 0.00562 & 50.909 & 0.00240\\
\rowcolor{lightgray!50} GPS + LELM  & \textbf{0.104} & \textbf{0.210} & 0.502 & 0.592 & 0.00372 & 0.00408 & \textbf{0.00511} & 33.606 & 0.00178\\

GPS (baseline) & 0.150 & 0.358 & \textbf{0.413} & 0.718 & 0.00434 & 0.00442 & 0.00592 & 80.503 & 0.00111 \\
\bottomrule
\end{tabular}
\end{adjustbox}
\end{table}

\textbf{Comparison against alternative pre-training targets:} We compare LELM to alternative structure-based pre-training targets, including node degree, local clustering coefficient, cycle counting, and Laplacian eigenvalues. For each alternative target, we pre-train a standard GIN (with our default settings) using L1 loss. Results are recorded in Table \ref{tab:alt}. Here, we see that LELM outperforms alternative structure-based pre-training targets, indicating that the Laplacian eigenvectors are a better pre-training target than many other natural choices for graph targets.
\begin{table}[t] 
\caption{Test MAE ($\downarrow$) performance comparison when pre-training a GIN on ZINC to predict alternative structural training targets.}
\label{tab:alt}
\centering
\begin{adjustbox}{max width=\linewidth}
\begin{tabular}{l c c}
\toprule
Alternative training targets & ZINC full & ZINC subset \\
\midrule
\rowcolor{lightgray!50}LELM & \textbf{0.130} &\textbf{ 0.353} \\
Node degree & 0.238 & 0.471 \\
Local clustering coefficient & 1.493 & 1.551 \\
Cycle counting & 0.285 & 0.420 \\
Lap Eigenvalues & 0.250 & 0.520 \\
\bottomrule
\end{tabular}
\end{adjustbox}
\end{table}

\textbf{Ablation on MLP head:} We pre-train both the GIN and GPS using LELM, replacing the graph-level MLP with a standard MLP which acts on each node's embeddings individually. Results can be seen in Table \ref{tab:ablation}; for both the GIN and GPS, pre-training is more effective with the graph-level MLP.

\begin{table}[H]
\caption{Test MAE ($\downarrow$) performance comparison on ZINC when replacing the graph-level MLP with a basic node-wise MLP during pre-training.}
\label{tab:ablation}
\centering
\begin{adjustbox}{max width=\linewidth}
\begin{tabular}{l c c}
\toprule
Model & ZINC full & ZINC subset \\
\midrule
\rowcolor{lightgray!50}GIN + LELM & 0.130 & 0.353 \\
GIN + LELM (no graph-level MLP) & 0.152 &0.435 \\
\rowcolor{lightgray!50}GPS + LELM & \textbf{0.104} & \textbf{0.210} \\
GPS + LELM (no graph-level MLP) & 0.126 & 0.261 \\

\bottomrule
\end{tabular}
\end{adjustbox}
\end{table}

\textbf{Enhancing existing pre-training methods:} To demonstrate the flexibility of our method, we complement an existing pre-training method with LELM and demonstrate performance improvements in the majority of downstream tasks. Full experimental details and results can be found in Section \ref{sec: pretrain_augment}.

\section{Limitations and Future Work} \label{ref: limitations}

There are several promising future directions toward improving the LELM pre-training framework. First, we have demonstrated the effectiveness of the framework for pre-training and fine-tuning a GNN on the same dataset, or on domain-related datasets, but we have yet to explore the effectiveness of LELM as a \textit{transfer learning} framework. A future direction could be exploring the viability of LELM when pre-training on synthetic graphs or on cross-domain datasets. 
\newpage
\bibliography{tagds_2025}

@article{sun2022does,
  title={Does gnn pretraining help molecular representation?},
  author={Sun, Ruoxi and Dai, Hanjun and Yu, Adams Wei},
  journal={Advances in Neural Information Processing Systems},
  volume={35},
  pages={12096--12109},
  year={2022}
}

@inproceedings{shaham2018spectralnet,
  title={SpectralNet: Spectral Clustering using Deep Neural Networks},
  author={Shaham, Uri and Stanton, Kelly and Li, Henry and Basri, Ronen and Nadler, Boaz and Kluger, Yuval},
  booktitle={International Conference on Learning Representations},
  year={2018}
}

@inproceedings{chen2022specnet2,
  title={SpecNet2: Orthogonalization-free Spectral Embedding by Neural Networks},
  author={Chen, Ziyu and Li, Yingzhou and Cheng, Xiuyuan},
  booktitle={Mathematical and Scientific Machine Learning},
  pages={33--48},
  year={2022},
  organization={PMLR}
}

@inproceedings{canturk2023graph,
  title={Graph Positional and Structural Encoder},
  author={Cant{\"u}rk, Semih and Liu, Renming and Lapointe-Gagn{\'e}, Olivier and L{\'e}tourneau, Vincent and Wolf, Guy and Beaini, Dominique and Ramp{\'a}{\v{s}}ek, Ladislav},
  booktitle={Forty-first International Conference on Machine Learning}, 
    year={2023}
}

@inproceedings{dwivedi2021graph,
  title={Graph Neural Networks with Learnable Structural and Positional Representations},
  author={Dwivedi, Vijay Prakash and Luu, Anh Tuan and Laurent, Thomas and Bengio, Yoshua and Bresson, Xavier},
  booktitle={International Conference on Learning Representations},
year={2021}
}

@article{mishne2019diffusion,
  title={Diffusion nets},
  author={Mishne, Gal and Shaham, Uri and Cloninger, Alexander and Cohen, Israel},
  journal={Applied and Computational Harmonic Analysis},
  volume={47},
  number={2},
  pages={259--285},
  year={2019},
  publisher={Elsevier}
}

@inproceedings{ma2023self,
  title={Self-contrastive graph diffusion network},
  author={Ma, Yixuan and Zhan, Kun},
  booktitle={Proceedings of the 31st ACM International Conference on Multimedia},
  pages={3857--3865},
  year={2023}
}

@inproceedings{GIN,
    author = {Xu, Keyulu and Hu, Weihua and Leskovec, Jure and Jegelka, Stefanie},
    title ={How Powerful are Graph Neural Networks?},
    booktitle = {International Conference on Learning Representations},
    year = {2019}
}

@article{liu2025graph,
  title={Graph foundation models: Concepts, opportunities and challenges},
  author={Liu, Jiawei and Yang, Cheng and Lu, Zhiyuan and Chen, Junze and Li, Yibo and Zhang, Mengmei and Bai, Ting and Fang, Yuan and Sun, Lichao and Yu, Philip S and others},
  journal={IEEE Transactions on Pattern Analysis and Machine Intelligence},
  year={2025},
  publisher={IEEE}
}

@inproceedings{hu2019strategies,
  title={Strategies for Pre-training Graph Neural Networks},
  author={Hu, Weihua and Liu, Bowen and Gomes, Joseph and Zitnik, Marinka and Liang, Percy and Pande, Vijay and Leskovec, Jure},
  booktitle={International Conference on Learning Representations},
year={2019}
}

@inproceedings{yan2024empowering,
  title={Empowering dual-level graph self-supervised pretraining with motif discovery},
  author={Yan, Pengwei and Song, Kaisong and Jiang, Zhuoren and Kang, Yangyang and Lin, Tianqianjin and Sun, Changlong and Liu, Xiaozhong},
  booktitle={Proceedings of the AAAI Conference on Artificial Intelligence},
  volume={38},
  number={8},
  pages={9223--9231},
  year={2024}
}

@article{rampavsek2022recipe,
  title={Recipe for a general, powerful, scalable graph transformer},
  author={Ramp{\'a}{\v{s}}ek, Ladislav and Galkin, Michael and Dwivedi, Vijay Prakash and Luu, Anh Tuan and Wolf, Guy and Beaini, Dominique},
  journal={Advances in Neural Information Processing Systems},
  volume={35},
  pages={14501--14515},
  year={2022}
}

@inproceedings{fan2019graph,
  title={Graph neural networks for social recommendation},
  author={Fan, Wenqi and Ma, Yao and Li, Qing and He, Yuan and Zhao, Eric and Tang, Jiliang and Yin, Dawei},
  booktitle={The world wide web conference},
  pages={417--426},
  year={2019}
}

@article{sterling2015zinc,
  title={ZINC 15--ligand discovery for everyone},
  author={Sterling, Teague and Irwin, John J},
  journal={Journal of chemical information and modeling},
  volume={55},
  number={11},
  pages={2324--2337},
  year={2015},
  publisher={ACS Publications}
}

@article{ramakrishnan2014quantum,
  title={Quantum chemistry structures and properties of 134 kilo molecules},
  author={Ramakrishnan, Raghunathan and Dral, Pavlo O and Rupp, Matthias and Von Lilienfeld, O Anatole},
  journal={Scientific data},
  volume={1},
  number={1},
  pages={1--7},
  year={2014},
  publisher={Nature Publishing Group}
}

@article{mayr2018large,
  title={Large-scale comparison of machine learning methods for drug target prediction on ChEMBL},
  author={Mayr, Andreas and Klambauer, G{\"u}nter and Unterthiner, Thomas and Steijaert, Marvin and Wegner, J{\"o}rg K and Ceulemans, Hugo and Clevert, Djork-Arn{\'e} and Hochreiter, Sepp},
  journal={Chemical science},
  volume={9},
  number={24},
  pages={5441--5451},
  year={2018},
  publisher={Royal Society of Chemistry}
}

@article{gaulton2012chembl,
  title={ChEMBL: a large-scale bioactivity database for drug discovery},
  author={Gaulton, Anna and Bellis, Louisa J and Bento, A Patricia and Chambers, Jon and Davies, Mark and Hersey, Anne and Light, Yvonne and McGlinchey, Shaun and Michalovich, David and Al-Lazikani, Bissan and others},
  journal={Nucleic acids research},
  volume={40},
  number={D1},
  pages={D1100--D1107},
  year={2012},
  publisher={Oxford University Press}
}

@article{martins2012bayesian,
  title={A Bayesian approach to in silico blood-brain barrier penetration modeling},
  author={Martins, Ines Filipa and Teixeira, Ana L and Pinheiro, Luis and Falcao, Andre O},
  journal={Journal of chemical information and modeling},
  volume={52},
  number={6},
  pages={1686--1697},
  year={2012},
  publisher={ACS Publications}
}

@article{mayr2016deeptox,
  title={DeepTox: toxicity prediction using deep learning},
  author={Mayr, Andreas and Klambauer, G{\"u}nter and Unterthiner, Thomas and Hochreiter, Sepp},
  journal={Frontiers in Environmental Science},
  volume={3},
  pages={80},
  year={2016},
  publisher={Frontiers Media SA}
}

@article{richard2016toxcast,
  title={ToxCast chemical landscape: paving the road to 21st century toxicology},
  author={Richard, Ann M and Judson, Richard S and Houck, Keith A and Grulke, Christopher M and Volarath, Patra and Thillainadarajah, Inthirany and Yang, Chihae and Rathman, James and Martin, Matthew T and Wambaugh, John F and others},
  journal={Chemical research in toxicology},
  volume={29},
  number={8},
  pages={1225--1251},
  year={2016},
  publisher={ACS Publications}
}

@article{subramanian2016computational,
  title={Computational modeling of $\beta$-secretase 1 (BACE-1) inhibitors using ligand based approaches},
  author={Subramanian, Govindan and Ramsundar, Bharath and Pande, Vijay and Denny, Rajiah Aldrin},
  journal={Journal of chemical information and modeling},
  volume={56},
  number={10},
  pages={1936--1949},
  year={2016},
  publisher={ACS Publications}
}

@article{kuhn2016sider,
  title={The SIDER database of drugs and side effects},
  author={Kuhn, Michael and Letunic, Ivica and Jensen, Lars Juhl and Bork, Peer},
  journal={Nucleic acids research},
  volume={44},
  number={D1},
  pages={D1075--D1079},
  year={2016},
  publisher={Oxford University Press}
}

@inproceedings{velivckovicdeep,
  title={Deep Graph Infomax},
  author={Veli{\v{c}}kovi{\'c}, Petar and Fedus, William and Hamilton, William L and Li{\`o}, Pietro and Bengio, Yoshua and Hjelm, R Devon},
  booktitle={International Conference on Learning Representations}
}

@article{xie2022self,
  title={Self-supervised learning of graph neural networks: A unified review},
  author={Xie, Yaochen and Xu, Zhao and Zhang, Jingtun and Wang, Zhengyang and Ji, Shuiwang},
  journal={IEEE transactions on pattern analysis and machine intelligence},
  volume={45},
  number={2},
  pages={2412--2429},
  year={2022},
  publisher={IEEE}
}

@article{oord2018representation,
  title={Representation learning with contrastive predictive coding},
  author={Oord, Aaron van den and Li, Yazhe and Vinyals, Oriol},
  journal={arXiv preprint arXiv:1807.03748},
  year={2018}
}

@article{nowozin2016f,
  title={f-gan: Training generative neural samplers using variational divergence minimization},
  author={Nowozin, Sebastian and Cseke, Botond and Tomioka, Ryota},
  journal={Advances in neural information processing systems},
  volume={29},
  year={2016}
}

@inproceedings{suninfograph,
  title={InfoGraph: Unsupervised and Semi-supervised Graph-Level Representation Learning via Mutual Information Maximization},
  author={Sun, Fan-Yun and Hoffman, Jordan and Verma, Vikas and Tang, Jian},
  booktitle={International Conference on Learning Representations}
}

@inproceedings{qiu2020gcc,
  title={Gcc: Graph contrastive coding for graph neural network pre-training},
  author={Qiu, Jiezhong and Chen, Qibin and Dong, Yuxiao and Zhang, Jing and Yang, Hongxia and Ding, Ming and Wang, Kuansan and Tang, Jie},
  booktitle={Proceedings of the 26th ACM SIGKDD international conference on knowledge discovery \& data mining},
  pages={1150--1160},
  year={2020}
}

@inproceedings{peng2020graph,
  title={Graph representation learning via graphical mutual information maximization},
  author={Peng, Zhen and Huang, Wenbing and Luo, Minnan and Zheng, Qinghua and Rong, Yu and Xu, Tingyang and Huang, Junzhou},
  booktitle={Proceedings of the web conference 2020},
  pages={259--270},
  year={2020}
}

@article{you2020graph,
  title={Graph contrastive learning with augmentations},
  author={You, Yuning and Chen, Tianlong and Sui, Yongduo and Chen, Ting and Wang, Zhangyang and Shen, Yang},
  journal={Advances in neural information processing systems},
  volume={33},
  pages={5812--5823},
  year={2020}
}

@inproceedings{hustrategies,
  title={Strategies for Pre-training Graph Neural Networks},
  author={Hu, Weihua and Liu, Bowen and Gomes, Joseph and Zitnik, Marinka and Liang, Percy and Pande, Vijay and Leskovec, Jure},
  booktitle={International Conference on Learning Representations}
}

@inproceedings{jiao2020sub,
  title={Sub-graph contrast for scalable self-supervised graph representation learning},
  author={Jiao, Yizhu and Xiong, Yun and Zhang, Jiawei and Zhang, Yao and Zhang, Tianqi and Zhu, Yangyong},
  booktitle={2020 IEEE international conference on data mining (ICDM)},
  pages={222--231},
  year={2020},
  organization={IEEE}
}

@article{xie2020noise2same,
  title={Noise2same: Optimizing a self-supervised bound for image denoising},
  author={Xie, Yaochen and Wang, Zhengyang and Ji, Shuiwang},
  journal={Advances in neural information processing systems},
  volume={33},
  pages={20320--20330},
  year={2020}
}

@inproceedings{batson2019noise2self,
  title={Noise2self: Blind denoising by self-supervision},
  author={Batson, Joshua and Royer, Loic},
  booktitle={International conference on machine learning},
  pages={524--533},
  year={2019},
  organization={PMLR}
}

@inproceedings{wang2017mgae,
  title={Mgae: Marginalized graph autoencoder for graph clustering},
  author={Wang, Chun and Pan, Shirui and Long, Guodong and Zhu, Xingquan and Jiang, Jing},
  booktitle={Proceedings of the 2017 ACM on Conference on Information and Knowledge Management},
  pages={889--898},
  year={2017}
}

@article{kipf2016variational,
  title={Variational graph auto-encoders},
  author={Kipf, Thomas N and Welling, Max},
  journal={arXiv preprint arXiv:1611.07308},
  year={2016}
}

@article{peng2020self,
  title={Self-supervised graph representation learning via global context prediction},
  author={Peng, Zhen and Dong, Yixiang and Luo, Minnan and Wu, Xiao-Ming and Zheng, Qinghua},
  journal={arXiv preprint arXiv:2003.01604},
  year={2020}
}

@article{hwang2020self,
  title={Self-supervised auxiliary learning with meta-paths for heterogeneous graphs},
  author={Hwang, Dasol and Park, Jinyoung and Kwon, Sunyoung and Kim, KyungMin and Ha, Jung-Woo and Kim, Hyunwoo J},
  journal={Advances in neural information processing systems},
  volume={33},
  pages={10294--10305},
  year={2020}
}

@article{wang2025graph,
  title={Graph Foundation Models: A Comprehensive Survey},
  author={Wang, Zehong and Liu, Zheyuan and Ma, Tianyi and Li, Jiazheng and Zhang, Zheyuan and Fu, Xingbo and Li, Yiyang and Yuan, Zhengqing and Song, Wei and Ma, Yijun and others},
  journal={arXiv preprint arXiv:2505.15116},
  year={2025}
}

@article{liu2022graph,
  title={Graph self-supervised learning: A survey},
  author={Liu, Yixin and Jin, Ming and Pan, Shirui and Zhou, Chuan and Zheng, Yu and Xia, Feng and Yu, Philip S},
  journal={IEEE transactions on knowledge and data engineering},
  volume={35},
  number={6},
  pages={5879--5900},
  year={2022},
  publisher={IEEE}
}

@inproceedings{spielman1996spectral,
  title={Spectral partitioning works: Planar graphs and finite element meshes},
  author={Spielman, Daniel A and Teng, Shang-Hua},
  booktitle={Proceedings of 37th conference on foundations of computer science},
  pages={96--105},
  year={1996},
  organization={IEEE}
}

@article{bo2023survey,
  title={A survey on spectral graph neural networks},
  author={Bo, Deyu and Wang, Xiao and Liu, Yang and Fang, Yuan and Li, Yawen and Shi, Chuan},
  journal={arXiv preprint arXiv:2302.05631},
  year={2023}
}

@article{bruna2013spectral,
  title={Spectral networks and locally connected networks on graphs},
  author={Bruna, Joan and Zaremba, Wojciech and Szlam, Arthur and LeCun, Yann},
  journal={arXiv preprint arXiv:1312.6203},
  year={2013}
}

@article{bo2023specformer,
  title={Specformer: Spectral graph neural networks meet transformers},
  author={Bo, Deyu and Shi, Chuan and Wang, Lele and Liao, Renjie},
  journal={arXiv preprint arXiv:2303.01028},
  year={2023}
}

@article{liao2019lanczosnet,
  title={Lanczosnet: Multi-scale deep graph convolutional networks},
  author={Liao, Renjie and Zhao, Zhizhen and Urtasun, Raquel and Zemel, Richard S},
  journal={arXiv preprint arXiv:1901.01484},
  year={2019}
}

@article{defferrard2016convolutional,
  title={Convolutional neural networks on graphs with fast localized spectral filtering},
  author={Defferrard, Micha{\"e}l and Bresson, Xavier and Vandergheynst, Pierre},
  journal={Advances in neural information processing systems},
  volume={29},
  year={2016}
}

@article{he2022convolutional,
  title={Convolutional neural networks on graphs with chebyshev approximation, revisited},
  author={He, Mingguo and Wei, Zhewei and Wen, Ji-Rong},
  journal={Advances in neural information processing systems},
  volume={35},
  pages={7264--7276},
  year={2022}
}

@article{dwivedi2020generalization,
  title={A generalization of transformer networks to graphs},
  author={Dwivedi, Vijay Prakash and Bresson, Xavier},
  journal={arXiv preprint arXiv:2012.09699},
  year={2020}
}

@article{lim2022sign,
  title={Sign and basis invariant networks for spectral graph representation learning},
  author={Lim, Derek and Robinson, Joshua and Zhao, Lingxiao and Smidt, Tess and Sra, Suvrit and Maron, Haggai and Jegelka, Stefanie},
  journal={arXiv preprint arXiv:2202.13013},
  year={2022}
}

@article{ma2023laplacian,
  title={Laplacian canonization: A minimalist approach to sign and basis invariant spectral embedding},
  author={Ma, George and Wang, Yifei and Wang, Yisen},
  journal={Advances in Neural Information Processing Systems},
  volume={36},
  pages={11296--11337},
  year={2023}
}

@article{belkin2001laplacian,
  title={Laplacian eigenmaps and spectral techniques for embedding and clustering},
  author={Belkin, Mikhail and Niyogi, Partha},
  journal={Advances in neural information processing systems},
  volume={14},
  year={2001}
}

@article{cai2020note,
  title={A note on over-smoothing for graph neural networks},
  author={Cai, Chen and Wang, Yusu},
  journal={arXiv preprint arXiv:2006.13318},
  year={2020}
}

@article{oono2019graph,
  title={Graph neural networks exponentially lose expressive power for node classification},
  author={Oono, Kenta and Suzuki, Taiji},
  journal={arXiv preprint arXiv:1905.10947},
  year={2019}
}

@article{keriven2022not,
  title={Not too little, not too much: a theoretical analysis of graph (over) smoothing},
  author={Keriven, Nicolas},
  journal={Advances in Neural Information Processing Systems},
  volume={35},
  pages={2268--2281},
  year={2022}
}
\bibliographystyle{tagds_2025}


\appendix

\section{Related works: GNN pre-training methods}
Towards the goal of improving graph foundation models, a variety of self-supervised graph pre-training tasks have been proposed. According to the taxonomy provided by \citet{liu2025graph, xie2022self}, existing graph pre-training methods can be categorized into two broad categories: \textit{contrastive} and \textit{predictive} methods. 

Contrastive methods maximize mutual information between pairs of data views using objectives like Jensen-Shannon estimator \citep{nowozin2016f} or InfoNCE \citep{oord2018representation}. Methods can be categorized by the types of views used: graph-node \citep{suninfograph, velivckovicdeep, peng2020graph}, subgraph-node \citep{hustrategies, jiao2020sub}, and subgraph-subgraph \cite{qiu2020gcc}. Some methods also employ graph augmentation to generate two views \citep{you2020graph}. 

Predictive methods, also referred to as generative methods \citep{liu2025graph}, self-generate labels and train to predict these labels. A first class of predictive models uses graph reconstruction, whether by using node/edge masking \citep{xie2020noise2same, batson2019noise2self, hustrategies} or using autoencoders \citep{wang2017mgae, kipf2016variational}. A second class of predictive methods are \textit{property prediction} methods, which precompute underlying graph properties as labels. Examples include statistical properties such as k-hop connectivity \citep{peng2020self} or topological properties like a meta-path \citep{hwang2020self}. Overall, there are a lack of works on property prediction-based methods, with the majority of predictive pre-training methods falling under the former category of graph reconstruction \citep{liu2025graph}. Our method, LELM, is the first property prediction method to use the graph Laplacian eigenvectors as a pre-training target. 

\section{Graph Laplacian eigenvector applications} \label{sec: applications}
\textbf{Provably minimal graph cuts:} The second-lowest eigenvector $\psi_2$, known as the Fiedler vector, can be used to generate a provably ``good" cut on a graph; in particular, for some arbitrary threshold $s \in \mathbb{R}$, we can define a Fiedler cut $C$ to be:
$$C = (\{i: \psi_2(i) < s\}, \{i: \psi_2(i) \geq s\}) $$
On any bounded-degree $n$-vertex planar graph, the optimal Fiedler cut has ratio $O(\frac{1}{n}) $ \citep{spielman1996spectral}.

\textbf{Positional encodings:} The low-frequency Laplacian eigenvectors naturally encode a global position on the graph. As a result, Laplacian positional encodings (LapPE) have been used as a standard positional encoding for graph transformer models \citep{dwivedi2020generalization, rampavsek2022recipe}. In practice, directly using the Laplacian eigenvectors as positional encodings creates sign and basis ambiguity issues, as $\psi_i$ is an eigenvector of $L \iff -\psi_i$ is an eigenvector of $L$. Approaches to solving this problem include designing an architecture which processes the Laplacian eigenvectors in a sign- and basis-invariant manner \citep{lim2022sign} or defining canonical directions for the eigenvectors \citep{ma2023laplacian}. 

\textbf{Spectral GNNs:} A variety of methods, known as spectral graph neural networks, use the Laplacian eigendecomposition to learn filters in signal domain \citep{bo2023survey}. Some methods explicitly compute or approximate the $k$ lowest-frequency Laplacian eigenvectors, learning advanced filters on the corresponding eigenvalues \citep{bruna2013spectral, liao2019lanczosnet, bo2023specformer}. Other methods instead learn polynomial filters on the graph \citep{defferrard2016convolutional, he2022convolutional}, circumventing the expensive process of eigendecomposition by learning a $k$-degree polynomial function with respect to $L$, i.e. $p(L) = \theta_0 I  + \theta_1 L + \dots + \theta_k L^k$.

\textbf{Spectral clustering:} The Laplacian eigenvectors have also been used for clustering applications. Given a set of data $x_1, \dots x_n$,  \citet{belkin2001laplacian} construct a weighted graph $G$ with $n$ nodes using a heat kernel. Then to generate a $k$-dimensional embedding, \citet{belkin2001laplacian} compute the $k$-lowest eigenvectors $\psi_2, \dots \psi_{k+1}$ (omitting the trivial eigenvector) of the graph Laplacian and assign data point $x_i$ the embedding $(\psi_2(i), \psi_3(i), \dots , \psi_{k+1}(i))$. \citet{shaham2018spectralnet, chen2022specnet2} learn this spectral map using a neural network, allowing for a natural extension of this map to new datapoints.

\section{Complete algorithms} \label{sec: Algorithms}
We provide a detailed overview of the eigenvector pre-training process in Algorithm \ref{alg: EigenvectorPrediction}, and an overview of the full pre-training and fine-tuning pipeline in Algorithm \ref{alg:cap}. 

\begin{algorithm}[H]
\caption{Eigenvector Prediction} \label{alg: EigenvectorPrediction}
\begin{algorithmic}[1] 
\Require Graph $G = (V, E)$; augmented node features $\Tilde{X} = \{\Tilde{x}_j\}$; Base GNN
\Ensure Output Pre-trained GNN model, $k$ lowest-frequency eigenvectors
\For {$i < \text{Pre-Training Epochs}$}
\State $\Vec{z}_0, \dots, \Vec{z}_n \gets \textsc{BaseGNN}(G, \Tilde{X}) $
\State $\vec{Z} \gets [\Vec{z}_1, \dots, \Vec{z}_n] \in \mathbb{R}^{nd}$
\State $\Tilde{U} \gets \textsc{MLP}(\Vec{Z})$
\State $\hat{U} = \textsc{QR}(\Tilde{U})$
\State $\textit{Loss} = \alpha\cdot\textsc{EnergyLoss}(\hat{U}) + \beta \cdot \textsc{EigvecLoss}(\hat{U})$
\State Back-propagate Loss, update model weights
\EndFor \\
\Return \textsc{BaseGNN}
\end{algorithmic}
\end{algorithm}

\begin{algorithm}[H]
\caption{Structure-Informed Graph Pre-training Framework}\label{alg:cap}
\begin{algorithmic}[1]
\Require Graph $G = (V, E)$; node features $X = \{x_j\}$; training labels $Y$; untrained Base GNN; untrained Downstream Prediction Head
\Ensure Trained Base GNN and Downstream Prediction Head
\State $\Tilde{X} \gets \textsc{AugmentFeatures}(G, X)$
\State \textsc{BaseGNN} $\gets \textsc{EigvecPreTrain}(G, \Tilde{X}, \textsc{BaseGNN})$
\For {$i < \text{Fine-tuning Epochs}$}
\State $\Vec{z}_0, \dots, \Vec{z}_n \gets \textsc{BaseGNN}(G, \Tilde{X}) $
\State $\vec{Z} \gets [\Vec{z}_1, \dots, \Vec{z}_n]$
\State $\hat{Y} \gets \textsc{DownstreamHead}(\Vec{Z})$
\State $\textit{Loss} = \textsc{LossCriterion}(\hat{Y}, Y)$
\State Backpropagate Loss, update model weights
\EndFor \\
\Return \textsc{BaseGNN, DownstreamHead}
\end{algorithmic}
\end{algorithm}

\section{Loss function} \label{sec: LossFunctionDetails}

We minimize a weighted sum of two loss functions: \textbf{(1) eigenvector loss} and \textbf{(2) energy loss}. Both loss functions respect necessary sign and basis invariances of Laplacian eigenvectors; full proofs can be found in \ref{sec: lossfn proofs}.

\textbf{Energy loss}, used by \citet{shaham2018spectralnet, dwivedi2021graph, ma2023self}, aims to minimize the sum of Rayleigh quotients:
\[ \mathcal{L}_{energy} = \frac{1}{k} \Tr (\hat{U}^\top L \hat{U}) \]

This loss function is motivated by the iterative optimization problem following from Courant-Fischer, which states that the eigenvectors of $L$ (and the eigenvectors of any Hermitian matrix) can be equivalently expressed as solutions to the following iterative optimization problem: 
\[
\psi_k \in 
\argmin_{\substack{\|x\|=1\\ x\perp \psi_1,\dots,\psi_{k-1}}}
x^\top L x .
\]
The term $\frac{x^\top L x}{x^\top x}$ is known as the Rayleigh quotient; because we normalize our predicted eigenvectors, we simply treat this as $x^\top L x$. 

However, minimizing this loss function only minimizes the \textit{sum} of the first $k$ Rayleigh quotients, meaning the minimizer (subject to orthogonality) is any set of vectors spanning same subspace spanned by the $k$ lowest frequency eigenvectors. For applications in clustering, this is reasonable, as the exact basis in which embeddings are expressed is often irrelevant; however, to require the model to truly predict the $k$-lowest eigenvectors, we must include a more explicit penalty, such as \textbf{eigenvector loss}. 

\textbf{Eigenvector loss}, used by \citet{mishne2019diffusion}, measures the difference between each $L\hat{u}_i$ and $\lambda_i \hat{u}_i$:
\[\mathcal{L}_{eigvec} = \frac{1}{k} \lVert (L \hat{U} - \hat{U} \Lambda_k ) \rVert \]

Eigenvector loss enforces both the correct basis and a strict ordering (up to eigenvalue multiplicity) on the predicted eigenvectors.
Our final loss function is then:
$$\mathcal{L} = \alpha \cdot \mathcal{L}_{energy} + \beta \cdot \mathcal{L}_{eigvec}$$

\section{Loss function alternative expressions}
Eigenvector loss, per-vector form: 

\[\mathcal{L}_{eigvec} = \frac{1}{k} \sum_{i=1}^k \lVert L\hat{u}_i - \lambda_i \hat{u}_i \rVert \]
Eigenvector loss, matrix form:
\[\mathcal{L}_{eigvec} = \frac{1}{k} \lVert (L \hat{U} - \hat{U} \Lambda_k ) \rVert \]

Energy loss, per-vector form:
\[\mathcal{L}_{energy} = \frac{1}{k} \sum_{i=1}^k \hat{u}_i^\top L\hat{u}_i  \]

Energy loss, matrix form:
\[ \mathcal{L}_{energy} = \frac{1}{k} \Tr (\hat{U}^\top L \hat{U}) \]

Energy loss is order-invariant and rotation invariant (see \ref{sec: lossfn proofs}); for applications in clustering, this is reasonable. However, we would like the model to learn the eigenvectors in their specific order, so we also define \textbf{absolute energy loss}, matching the Rayleigh quotient with the ground-truth eigenvalue:
\[\mathcal{L}_{energy\_abs} = \frac{1}{k} \sum_{i=1}^k \lvert \hat{u}_i^\top L\hat{u}_i - \lambda_i \rvert \]
This can be written as, in matrix form: 
\[\mathcal{L}_{energy\_abs} =  \frac{1}{k} \Tr \lvert \hat{U}^\top L \hat{U} - \Lambda_k \rvert \]

In practice, we do not show any results using absolute energy loss, and instead linearly combine energy loss with eigenvector loss to avoid order and rotation invariance. However, absolute energy loss remains an interesting avenue to explore. 

\subsection{Orthogonality}
To ensure the model does not output $k$ copies of the trivial eigenvector, we must give the model orthogonality constraints on the output vectors. There are again two reasonable choices here: \textbf{(1) forced orthogonality} and \textbf{(2) orthogonality loss}. 

\textbf{Forced orthogonality}, used in \cite{shaham2018spectralnet}, imposes orthogonality on the final outputs of the model via QR decomposition. In other words, if $\hat{U}'$ is the initial output to the model, $Q$ is an $n \times k$ matrix with orthonormal columns, and $R$ is a $k \times k$ upper triangular matrix, then we achieve the final output $\hat{U}$ as such:
\[QR = \hat{U}'\]
\[\hat{U} = Q\]

\textbf{Orthogonality loss}, used in \cite{dwivedi2021graph, ma2023self, mishne2019diffusion} imposes a softer constraint, encouraging orthogonality by penalizing the model for producing pairwise similar vectors. This can be written as:
\[\mathcal{L}_{ortho} = \frac{1}{k} \lVert \hat{U}^\top \hat{U} - I \rVert\]

Based on preliminary testing, we found that forced orthogonality improved performance on the eigenvector-learning, and thus use forced orthogonality in all of our experiments. 

\section{Energy and eigenvector losses are sign and basis invariant} \label{sec: lossfn proofs}

\subsection{Definition of basis invariance}
Consider any eigenspace spanned by ground truth eigenvectors $[\psi_j, \psi_{j+1}, \dots \psi_{j+k-1}] = V$. Also recall that, by Spectral Theorem, we can decompose any vector $u$ into a linear combination of all eigenvectors:
\[u = \sum_{i=1}^n c_i \psi_i \]
Then a loss function is basis invariant if any rotation of the projected component $VV^\top u$ does not change the loss incurred by $u$. In other words, $u$ gets to arbitrarily ``choose" with what basis it wants to express its $VV^\top u$ component. Sign invariance is a special case of basis invariance, where changing sign is equivalent to rotating over a one-dimensional subspace (note that this is slightly stronger than the most apparent form of sign invariance, where we would say $\mathcal{L}(u) = \mathcal{L}(-u)$; instead, we can flip any \textit{component} $c_i\psi_i$ of $u$ when decomposed in terms of eigenvectors).

\begin{definition}[Basis invariance]
    Consider an eigenspace spanned by ground truth eigenvectors $[\psi_j, \psi_{j+1}, \dots \psi_{j+k-1}] = \Psi \in \mathbb{R}^{n \times k}$. Consider an eigenspace rotation $R_\Psi$ defined as such:
\[R_\Psi = \Psi A \Psi^\top + (I_n - \Psi\Psi^\top), A \in \textsc{SO}(k)\]
A loss function $\mathcal{L}(u)$ is basis invariant if, for all such $\Psi, R_\Psi$, $u \in \mathbb{R}^n$, we have:
\[ L(u) = L(R_\Psi u)\]
\end{definition}

\subsection{Proofs}

\begin{lemma}[Energy loss is basis invariant]
For any $R_\Psi$ and a single eigenvector prediction $u \in \mathbb{R}^n$, we have:
\[u^\top L u = (R_\Psi u)^\top L (R_\Psi u)\]
\end{lemma}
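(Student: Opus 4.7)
The plan is to decompose $u$ with respect to the eigenspace spanned by $\Psi$ and its orthogonal complement, then check that $R_\Psi$ preserves every quantity on which $u^\top L u$ depends. Write $u = v + w$ where $v = \Psi \Psi^\top u$ and $w = (I_n - \Psi\Psi^\top) u$. For this statement to make sense I read $\Psi$ as spanning a (possibly degenerate) eigenspace with a common eigenvalue $\lambda$, so that every vector in $\mathrm{col}(\Psi)$ satisfies $Lx = \lambda x$; this is the only interpretation under which rotating by an arbitrary $A \in \mathrm{SO}(k)$ inside the space is a symmetry of $L$.

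First I would establish the three ingredients needed. (i) Because the columns of $\Psi$ are orthonormal, $\Psi^\top \Psi = I_k$, so $\Psi\Psi^\top$ is the orthogonal projector onto $\mathrm{col}(\Psi)$. (ii) Applying $R_\Psi$ gives $R_\Psi u = \Psi A \Psi^\top u + w$; the first term still lies in $\mathrm{col}(\Psi)$, and its norm equals $\|v\|$ because
\[
\|\Psi A \Psi^\top u\|^2 = u^\top \Psi A^\top \Psi^\top \Psi A \Psi^\top u = u^\top \Psi A^\top A \Psi^\top u = u^\top \Psi\Psi^\top u = \|v\|^2,
\]
using $A^\top A = I_k$. (iii) Because $L$ preserves $\mathrm{col}(\Psi)$ (multiplication by $\lambda$) and is symmetric, it also preserves the orthogonal complement, so $Lv \in \mathrm{col}(\Psi)$ and $Lw \perp \mathrm{col}(\Psi)$, which means the cross terms $v^\top L w$ and $w^\top L v$ vanish.

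Combining these, the quadratic form splits as
\[
u^\top L u = v^\top L v + w^\top L w = \lambda \|v\|^2 + w^\top L w,
\]
since $Lv = \lambda v$. Writing $v' = \Psi A \Psi^\top u$, the same decomposition applied to $R_\Psi u = v' + w$ yields $(R_\Psi u)^\top L (R_\Psi u) = \lambda \|v'\|^2 + w^\top L w$. By step (ii), $\|v'\|^2 = \|v\|^2$, and the perpendicular component is unchanged by construction, so the two expressions agree.

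The main obstacle is really just conceptual rather than computational: one has to recognise that the statement is only meaningful (and true) when $\Psi$ is taken to span a common-eigenvalue subspace, since otherwise $R_\Psi$ would mix eigenvectors with different eigenvalues and the Rayleigh quotient would genuinely change. Once that interpretation is fixed, the argument is a routine orthogonal-decomposition calculation, and the same decomposition will immediately extend to the matrix form $\mathrm{Tr}(\hat U^\top L \hat U)$ used in the energy loss by applying it column-by-column.
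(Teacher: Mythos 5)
Your proof is correct, but it takes a different route from the paper's. The paper argues globally through the operator identity: it first checks that $R_\Psi$ is orthogonal, then uses the shared eigenvalue $\lambda$ of the columns of $\Psi$ to show $R_\Psi L = L R_\Psi$, whence $R_\Psi^\top L R_\Psi = R_\Psi^\top R_\Psi L = L$ and the invariance of the quadratic form follows in one line. You instead split $u = v + w$ into its component in $\mathrm{col}(\Psi)$ and the orthogonal complement, kill the cross terms using the $L$-invariance of both subspaces, and reduce everything to $u^\top L u = \lambda\|v\|^2 + w^\top L w$ together with norm preservation of the in-subspace component under $A \in \mathrm{SO}(k)$. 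Both arguments hinge on exactly the same hypothesis you flag — that $\Psi$ spans a common-eigenvalue subspace — which the paper also invokes ("all columns are eigenvectors with a shared eigenvalue $\lambda$"), so your reading is the intended one. What each buys: your decomposition makes the mechanism transparent (it exhibits precisely which quantities the Rayleigh quotient depends on and why mixing distinct eigenvalues would break the statement), whereas the paper's commutation identity is more economical and is reused verbatim in the proof of the next lemma (basis invariance of the eigenvector loss), where your column-space splitting would need an extra step to control $\|L(R_\Psi u) - \lambda R_\Psi u\|$. Your closing remark that the argument extends column-by-column to $\Tr(\hat U^\top L \hat U)$ is also fine, since the trace is just the sum of the per-column quadratic forms.
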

\begin{proof}
    First note that $R_\Psi$ is orthogonal; the set of all $R_\Psi$ describes a subset of $SO(k)$ where only the $k$ basis vectors in $\Psi$ are rotated. Thus, we have $R_\Psi^\top R_\Psi = I$. 

    In addition, because $\Psi$ is an eigenspace, all columns are eigenvectors with a shared eigenvalue $\lambda$. Then we have:
    \[R_\Psi L = \Psi A \Psi^\top L + L - \Psi\Psi^\top L = 
\lambda \Psi A\Psi^\top  + L - \lambda \Psi \Psi^\top = L\Psi A \Psi^\top + L - L\Psi \Psi^\top = LR_\Psi  \]
Then we have:
\[R_\Psi^\top L R_\Psi = R_\Psi^\top R_\Psi L = L\]
Thus, for any $u$, we have:
\[u^\top L u = u^\top R_\Psi^\top L R_\Psi u = (R_\Psi u)^\top L (R_\Psi u)\]
\end{proof}

\begin{lemma}[Eigenvector loss is basis invariant]
    For any $R_\Psi$ and a single eigenvector prediction $u \in \mathbb{R}^n$ and ground truth eigenvalue $\lambda$, we have:
    \[  \lVert Lu - \lambda u \rVert  = \lVert L (R_\Psi u) - \lambda  (R_\Psi u) \rVert\]
\end{lemma}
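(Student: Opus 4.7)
The plan is to reuse the two key facts established in the proof of the preceding lemma: namely, that $R_\Psi$ is orthogonal (so it preserves the Euclidean norm) and that $R_\Psi$ commutes with $L$, i.e. $L R_\Psi = R_\Psi L$. Both properties follow because the columns of $\Psi$ span an eigenspace of $L$ with a single shared eigenvalue, so applying $L$ on either side of the rotation $\Psi A \Psi^\top + (I_n - \Psi\Psi^\top)$ produces the same thing.

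Given these two facts, the proof is a short algebraic manipulation. First I would rewrite the right-hand side by pulling $R_\Psi$ through $L$:
\[
L(R_\Psi u) - \lambda(R_\Psi u) \;=\; R_\Psi L u - R_\Psi(\lambda u) \;=\; R_\Psi\bigl(L u - \lambda u\bigr).
\]
Then I would apply the orthogonality of $R_\Psi$ to conclude
\[
\lVert L(R_\Psi u) - \lambda(R_\Psi u) \rVert \;=\; \lVert R_\Psi(Lu - \lambda u)\rVert \;=\; \lVert Lu - \lambda u \rVert,
\]
which is the desired identity.

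There is no real obstacle here, since the heavy lifting (establishing the commutation $L R_\Psi = R_\Psi L$ and the orthogonality $R_\Psi^\top R_\Psi = I$) was already done in the proof of the energy-loss lemma; the eigenvector-loss case just needs one additional step, the invariance of the Euclidean norm under an orthogonal transformation. The only care needed is to note that the $\lambda$ appearing in the statement is the shared eigenvalue of the eigenspace under consideration, which is exactly the setting in which $R_\Psi$ was defined, so the commutation step applies verbatim. The extension from a single prediction $u$ to the matrix form $\lVert L\hat U - \hat U \Lambda_k\rVert$ follows by applying the same argument column by column.
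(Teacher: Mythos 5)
Your proposal is correct and follows essentially the same route as the paper: invoke the commutation $LR_\Psi = R_\Psi L$ established in the energy-loss lemma, factor the residual as $R_\Psi(Lu-\lambda u)$, and use that the orthogonal map $R_\Psi$ preserves the Euclidean norm. (One minor remark: your caveat that $\lambda$ must be the shared eigenvalue of the eigenspace is not actually needed for this step, since $\lambda$ is a scalar and commutes with $R_\Psi$ automatically; only the commutation with $L$ relies on the eigenspace structure.)
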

\begin{proof}
    We know, from our proof above in Lemma 1, that $R_\Psi L = L R_\Psi$. 
    Because $R_\Psi \in \textsc{SO}(k)$, we have $\lVert R_\Psi x \rVert = \lVert x \rVert$ for any $x \in \mathbb{R}^n$. Then we have:
    \[  \lVert Lu - \lambda u \rVert  = \lVert R_\Psi (L u - \lambda  u) \rVert\]
    \[  \lVert Lu - \lambda u \rVert  = \lVert  L (R_\Psi  u) - \lambda  (R_\Psi u) \rVert\]
\end{proof}
We have an even stronger statement of invariance for energy loss: \textbf{given a predicted set of $k$ orthogonal vectors, rotating the vectors within the same subspace does not impact loss. }In other words, a model trained on energy loss only needs to predict the correct \textit{subspace} of $k$ eigenvectors. This is clearly not true of eigenvector loss. Depending on the application, this kind of invariance can be good or bad. 
\begin{lemma}[Energy loss is rotation invariant] Let $L$ be a Laplacian matrix and $V \subseteq \mathbb{R}^n$ be some $k$-dimensional subspace.  Suppose $U = [u_1, u_2, \dots, u_k], W = [w_1, w_2, \dots, w_k] \in \mathbb{R}^{n \times k}$ are both orthonormal bases for $V$. Then we have:
\[\frac{1}{k} \Tr (U^\top L U) = \frac{1}{k} \Tr (W^\top L W) \]
\end{lemma}

\begin{proof}
    Note that $UU^\top = WW^\top$, as they are both orthogonal projectors for the same subspace. Then we have, by the cyclic property of trace:
    $$\frac{1}{k} \Tr (U^\top L U) = \frac{1}{k} \Tr (UU^\top L) = \frac{1}{k} \Tr (WW^\top L) = \frac{1}{k} \Tr (W^\top L W)$$
\end{proof}


\section{Node feature augmentation} \label{sec: NodeFeatures}
\subsection{Graph Diffusion Operator}
The diffusion operator $P$ of a graph $G$ is defined as:
\[ P = D^{-1}A \]
Each entry $P_{ij}$ represents the probability of starting a random walk at node $i$ and ending at node $j$ after one step.  One can also take powers of the diffusion operator, $P^t$.  Each entry of the powered matrix,  $P^t_{ij}$, represents the probability of starting a random walk at node $i$ and ending at node $j$ after t steps

The $j^\text{th}$ wavelet operator $\Psi_j$ of a graph $G$ is defined as:
\[ \Psi_j = P^{2^{j-1}} - P^{2^{j}} \]
\[ \Psi_0 = I - P\]

A wavelet bank, $\mathcal{W}_J$ is a collection of wavelet operators such that:
\[ \mathcal{W}_J = \{\Psi_j\}_{0\leq j \leq J} \cup P^{2^J}\]

\subsection{Node Embeddings}
\textbf{Wavelet positional embeddings} encode information about the relative position of each node within the graph. We randomly select two nodes from each graph, $i$ and $j$, and start dirac signals $\delta_i, \delta_j$.  We then apply these signals to each wavelet, $\Psi_k$, in our wavelet bank.  The wavelet positional embedding for node $m$ is the $m^{\text{th}}$ row of the resulting matrix.

\[ w_{m,k} = \Psi_k(m, \cdot) \begin{bmatrix}
    | & |\\\delta_i & \delta_j\\| & |
\end{bmatrix} \]
\[ w_m = \begin{bmatrix}
    w_{m, 1} & \dots & w_{m,J}
\end{bmatrix} \]

\textbf{Diffused dirac embeddings} encode information about the connectedness of each node and its neighbors.  For each node, $m$, we apply the $m^\text{th}$ row of the diffusion matrix $P$ to each wavelet $\Psi_k$ in our wavelet bank. As above, the diffused dirac embedding for node $m$ is the $m^{\text{th}}$ row of the resulting matrix.
\[ d_{m, k} = \Psi_k(m, \cdot) ~P(m, \cdot)^\top \]
\[ d_m = \begin{bmatrix}
    d_{m, 1} & \dots & d_{m,J}
\end{bmatrix} \]

These node embeddings are unique up to co-spectrality of the graph Laplacian.

\begin{lemma}
    [Uniqueness up to co-spectrality] Let $G_1, G_2$ be graphs of size $n$ with Laplacian matrices $L_1, L_2$ respectively.  Let $d^1_m, d^2_m$ represent the diffused dirac embeddings for each node in $G_1, G_2$.  Then if $L_1$ and $L_2$ have different eigenvalues, $\{d^1_m: m \leq n\} \neq \{d_m^2: m\leq n\}$ 
\end{lemma}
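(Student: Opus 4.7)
The plan is to prove the contrapositive: assuming the embedding sets $\{d^1_m\} = \{d^2_m\}$ coincide as subsets of $\mathbb{R}^J$, derive that $L_1$ and $L_2$ are co-spectral. The starting observation is that each coordinate can be recast as a diagonal entry of a matrix product,
\[
d_{m,k} \;=\; \langle \Psi_k(m,\cdot),\, P(m,\cdot)\rangle \;=\; [\Psi_k P^\top]_{mm},
\]
so the embedding records a carefully chosen list of diagonal entries of polynomials in $P$ and $P^\top$.

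My first step is to extract graph-level scalars from the set $\{d_m\}$. Equality of the sets, promoted to multisets by bookkeeping multiplicities, forces agreement of every coordinate-wise power sum $\sum_m (d_{m,k})^r$ and every joint moment $\sum_m d_{m,k}d_{m,\ell}$. The simplest of these, $\sum_m d_{m,k} = \Tr(\Psi_k P^\top)$, already equates a family of trace invariants across the two graphs, and the telescoping of $\Psi_k = P^{2^{k-1}} - P^{2^k}$ together with the partition-of-unity relation $\sum_j \Psi_j + P^{2^J} = I$ means these invariants reach powers of $P$ up through $P^{2^J+1}$.

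Next, I would combine these linear trace constraints with the higher joint moments to isolate individual traces $\Tr(P^j)$ for enough values of $j$ to pin down the multiset of eigenvalues of $P$ via Newton's identities. Once the spectrum of $P$ is recovered, the symmetric companion $D^{-1/2}AD^{-1/2}$ shares it, yielding the spectrum of the normalized Laplacian $I - D^{-1/2}AD^{-1/2}$. To pass from the normalized to the unnormalized Laplacian $L = D - A = D(I-P)$, I would exploit the diagonal information already present in the $k=0$ coordinate, $d_{m,0} = P_{mm} - \sum_j P_{mj}^2$, since the row norms $\|P(m,\cdot)\|_2^2$ combined with the spectrum of $P$ allow the degree information needed to reconstruct $L$'s spectrum.

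The main obstacle is that $P$ is not symmetric in general, so $\Tr(\Psi_k P^\top)$ is not a pure spectral invariant of $P$: it couples eigenvalues and eigenvectors of the asymmetric diffusion operator. The hardest part of the argument will be showing that the full family of moments available from the multiset structure suffices to decouple these contributions, most likely by transporting to the symmetric conjugate $\widetilde{P} = D^{1/2} P D^{-1/2}$ where the relevant traces collapse to clean power sums of the shared eigenvalues.
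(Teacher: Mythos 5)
There is a genuine gap: what you have written is a plan whose decisive steps are left as hopes, and several of them would not go through. First, the quantities you can actually extract are far too few and of the wrong type. Equality of the embedding sets (note the lemma is stated for \emph{sets}, so even the ``promotion to multisets'' is not free) gives you, at the level of first moments, only the $J+2$ scalars $\sum_m d_{m,k} = \Tr\bigl(\Psi_k P^\top\bigr)$, and because $P$ is not symmetric these are mixed invariants of eigenvalues \emph{and} eigenvectors of $P$, not power sums $\Tr(P^j)$. Higher joint moments such as $\sum_m d_{m,k} d_{m,\ell}$ are sums of products of diagonal entries, which are not traces of any polynomial in $P$, so Newton's identities have nothing to act on; and since the wavelet bank stops at scale $2^J$ with $J$ fixed independently of $n$, you could not obtain $n$ independent power sums even in the best case. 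You acknowledge the non-symmetry obstruction yourself and defer it (``the hardest part of the argument will be showing\ldots''), which is precisely the step that would need to be proved. Second, the last link of the chain is asserted, not argued: knowing the spectrum of $P$ (equivalently of the normalized Laplacian) together with degree information read off from $d_{m,0}$ does not in any obvious way determine the spectrum of the \emph{unnormalized} Laplacian $L = D(I-P)$, since that spectrum depends on how $D$ interacts with the eigenvectors of $I-P$, not just on the two separate multisets. So the reconstruction route, as sketched, does not yield the lemma.

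It is also worth noting that the paper proves the statement in the opposite, much more economical direction: rather than trying to recover the spectrum of $L$ from the embeddings, it argues that distinct $L$-spectra force distinct random-walk operators $P$ (via $L_{rw} = D^{-1}L = I - P$), hence distinct $\Psi_0 = I - P$, hence distinct diffused dirac embeddings. Your strategy, if it could be completed, would establish the stronger claim that the embedding set determines the Laplacian spectrum, but that is substantially more than the lemma asks and, for a fixed finite wavelet bank, is unlikely to be provable by moment arguments alone.
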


\begin{proof}
Consider the random-walk Laplacian of a graph: $L_{rw} := I - D^{-1}A = I - P$.  Moreover, note that $L_{rw} = D^{-1}L$.  Observe that \begin{align*}
    L_{rw}Dv &= D^{-1}LDv\\
    &= D^{-1}U\Lambda U^\top Dv\\
    &= Bv \text{ for some diagonalizable matrix } B \text{ with eigenvalues } \lambda_i, \dots, \lambda_n\\
\end{align*}
Where $U = \begin{bmatrix}
    \psi_1 & \dots & \psi_n
\end{bmatrix}$, with $\psi_i$ orthonormal eigenvectors of $L$ and $\Lambda$ is the diagonal matrix of eigenvalues $\lambda_1, \dots, \lambda_n$ of $L$.
Any change to the eigenspectrum of $L$, clearly results in a change to $L_{rw}$, and therefore $P$.  Since $\Psi_0 = I - P$, any two graphs with distinct Laplacian eigenspectra will have distinct diffused dirac node embeddings.
\end{proof}

 \section{Alternative structural pre-training targets}
 Here, we formally define and provide details for the alternative pre-training targets used in section \ref{sec: standalone}.  
 \begin{itemize}
     \item \textbf{Node degree:} A node-level label representing the degree of each node
     \item \textbf{Local clustering coefficient:} A node-level label computing the local clustering coefficient of each node. For a fixed node $u$, the coefficient $C$ is given by: $${\displaystyle C={\frac {2|\{e_{jk}:v_{j},v_{k}\in N_{u},e_{jk}\in E\}|}{|N_{u}|(|N_{u}| - 1)}}.}$$
     \item \textbf{RWSE:} A node-level label computing self-walk probabilities at varying step counts for the diffusion operator \citep{dwivedi2021graph}. In our experiments, we use step counts from the interval $[2, 22]$. 
     \item \textbf{Cycle counting:} A graph-level label computing cycle counts of varying lengths. In our experiments, we count cycles up to length 9. 
     \item \textbf{Lap Eigenvalues:} A graph-level label computing the $k$-lowest Laplacian eigenvalues $\lambda_1, \dots, \lambda_k$. We use the same $k = 6$ as we do with LELM. 
 \end{itemize}

 For all alternative structural pre-training tasks, we use the same hyperparameters for GIN as displayed in \ref{tab: settings}, with no initial features and using a standard MAE loss instead of eigenvector + energy loss. We train on the full ZINC dataset. All structural pre-training targets are normalized to have mean 0 and standard deviation 1 across the entire dataset.


\section{Enhancing an existing Graph Neural Network pre-training method} \label{sec: pretrain_augment}
\subsection{Summary}
We augment the existing molecular pre-training methods proposed by \citet{hu2019strategies} with eigenvector-learning. In particular, \citet{hu2019strategies} propose node-level pre-training tasks (context prediction and masking) on ZINC15 \citep{sterling2015zinc}, followed by a graph-level supervised pre-training task on ChEMBL \citep{mayr2018large, gaulton2012chembl}. We augment the graph-level supervised pre-training step by adding an additional MLP head to predict eigenvectors, and we evaluate on five downstream datasets based on work by \citet{sun2022does}.

Detailed results are shown in Table \ref{tab: AugmentExisting}. Eigenvector-learning consistently improves performance for the masking pre-training pipeline, but achieves mixed results on the context prediction pipeline. Notably, performance for the masking pipeline was increased for all five datasets when performing eigenvector pre-training with the graph-level MLP head.

\begin{table}[t]\centering
\caption{Test ROC-AUC (\%, $\uparrow$) performance on 5 molecular prediction tasks when \textbf{augmenting an existing pre-training method} on a GIN base model. \textit{Sup.} refers to the original supervised pre-training as implemented by \citet{hu2019strategies}, and \textit{Sup.+} refers to supervised training with LELM. Results for \textit{no pre-training} are taken directly from \citet{sun2022does}. All methods are tuned over seven learning rates and averaged over three seeds.}  \label{tab: AugmentExisting}
\begin{adjustbox}{max width=\linewidth}
\begin{tabular}{lcccccc}\toprule
\multicolumn{2}{c}{Dataset} &BACE &BBBP &Tox21 &ToxCast &SIDER \\\cmidrule{1-7}
Pretrain method &MLP Head & & & & & \\\midrule
\rowcolor{lightgray!50} ContextPred, Sup.+ &Graph-level &$79.62 \pm 3.63$ &$\mathbf{70.76 \pm 1.64}$ &$77.94 \pm 0.11$ &$\mathbf{66.13 \pm 0.34}$ &$60.05 \pm 0.99$ \\
\rowcolor{lightgray!50} ContextPred, Sup.+ &Node-wise &$75.87 \pm 3.11$ &$68.74 \pm 1.07$ &$78.86 \pm 0.06$ &$63.78 \pm 0.32$ &$59.83 \pm 0.53$ \\

ContextPred, Sup. &- &$\mathbf{84.98 \pm 1.28}$ &$68.25 \pm 0.48$ &$77.44 \pm 0.19$ &$64.01 \pm 0.81$ &$\mathbf{62.87 \pm 0.89}$ \\
\rowcolor{lightgray!50} Masking, Sup.+ &Graph-level &$80.71 \pm 3.84$ &$68.33 \pm 0.89$ &$\mathit{79.09 \pm 0.25}$ &$\mathit{65.96 \pm 0.20}$ &$\mathit{62.41 \pm 1.77}$ \\

\rowcolor{lightgray!50} Masking, Sup.+ &Node-wise &$\mathit{81.02 \pm 1.67}$ &$\mathit{69.94 \pm 1.76}$ &$\mathbf{79.33 \pm 0.41}$ &$65.14 \pm 0.44$ &$59.38 \pm 1.11$ \\

Masking, Sup. &- &$75.42 \pm 2.64$ &$67.36 \pm 4.60$ &$78.33 \pm 0.24$ &$64.88 \pm 0.82$ &$61.6 \pm 1.78$ \\

No pre-training &- &$75.77 \pm 4.29$ &$69.62 \pm 1.05$ &$75.52 \pm 0.67$ &$63.67 \pm 0.32$ &$59.07 \pm 1.13$ \\
\bottomrule
\end{tabular}
\end{adjustbox}
\end{table}

\subsection{Learning rate tuning}
We keep the majority of the settings from \cite{hu2019strategies} the same. For downstream fine-tuning, we tune over 7 learning rates for fair comparison according to \cite{sun2022does}. We run each method and learning rate over 3 seeds, and select the learning rate based on mean validation accuracy over all learning rates. 

\subsection{Downstream datasets}
We briefly list and cite the five downstream datasets here for reference. The five datasets are the datasets chosen in \cite{sun2022does}, and are a subset of the eight primary downstream datasets evaluated in \cite{hu2019strategies}. 
\begin{itemize}
    \item \textbf{BACE:} Qualitative binding results \cite{subramanian2016computational}
    \item \textbf{BBBP:} Blood-brain barrier penetration \cite{martins2012bayesian}
    \item \textbf{Tox21:} Toxicity data \cite{mayr2016deeptox}
    \item \textbf{Toxcast:} Toxicology measurements \cite{richard2016toxcast}
    \item \textbf{SIDER:} Database of adverse drug reactions (ADR) \cite{kuhn2016sider}

\end{itemize}

\section{Detailed experimental settings} \label{sec: Detailed}
A complete overview of model hyperparameters and settings can be found in Table \ref{tab: settings}. Heuristically, the Graph-level MLP head hidden dimension is chosen to be the max \# nodes multiplied by the hidden dimension size of the base GNN. We do NOT omit the trivial eigenvector when counting number of eigenvectors predicted. 

\begin{table}[!htp]\centering
\caption{A comprehensive list of all model hyperparameters used during the eigenvector pre-training step. All hyperparameters highlighted in gray are specific to eigenvector-learning, while other listed configs reflect general GNN settings (and are set to match default values in each respective baseline work).} 
\begin{adjustbox}{max width=\linewidth}
    
\begin{tabular}{lccc}\toprule \label{tab: settings}
Method &GIN (\ref{sec: standalone}) &GPS (\ref{sec: standalone}) &GIN pre-training (\ref{sec: pretrain_augment}) \\\midrule
Pre-training dataset &ZINC-subset (12k), ZINC (250k), QM9 (134k) &ZINC-subset (12k), ZINC (250k), QM9 (134k) &ZINC15 (2M), ChEMBL (456K) \\
Base architecture &GIN &Transformer/GIN &GIN \\
\# params &33543 &157680 &2252210 \\
\# layers of per-node feature update &3 &3 &2 \\
\# layers of message passing &4 &4 &5 \\
Hidden dim &60 &60 &300 \\
Activation fn &ReLU &ReLU &ReLU \\
Dropout &0.1 &0.1 &0.2 \\
Batch size &128 &128 &32 \\
Learning rate &0.001 &0.001 &0.001 \\
Optimizer &Adam &Adam &Adam \\
Scheduler &ReduceLROnPlateau &ReduceLROnPlateau &None \\
 & patience=5, factor=0.9 &patience=20, factor=0.5 & - \\ 
Pre-Training Epochs &200 &100 &100 \\
Fine Tuning Epochs &500 &150 & 100 \\
\rowcolor{gray!50} \textbf{Laplacian norm type} &Unnormalized &Unnormalized &Unnormalized \\
 \rowcolor{gray!50} \textbf{\# eigenvectors predicted} &6 &6 &5 \\
\rowcolor{gray!50} \textbf{Initial features} &Diffused dirac + Wavelet pos. &Diffused dirac + Wavelet pos. &Molecule features \\
\rowcolor{gray!50} \textbf{MLP head type(s)} &Graph-level &Graph-level &Graph-level, per-node \\
\rowcolor{gray!50} \textbf{Graph-level MLP max \# nodes} &40 &40 &50 \\
\rowcolor{gray!50} \textbf{MLP head \# layers} &5 &5 &1 \\
\rowcolor{gray!50} \textbf{MLP head hidden dim} &2400 &2400 &N/A \\
\rowcolor{gray!50} \textbf{MLP head activation fn} &ReLU &ReLU &N/A \\
\rowcolor{gray!50} \textbf{Loss function (and coefficient)} &2*Eigenvector + 1*energy &2*Eigenvector + 1*energy &0.25 * Eigenvector + 0.05 * ortho \\
\rowcolor{gray!50} \textbf{Other features/notes} & & Removed graphs with less than six nodes during pre-training & \\
\bottomrule
\end{tabular}
\end{adjustbox}
\end{table}

For our ablation experiments with node-wise MLP heads, we keep all settings the same, replacing the graph-level MLP, which has an graph-level input dimension of $60 \cdot 40 = 2400$, with a node-level MLP which has an input dimension and hidden dimension of 60.

For ZINC-12k and ZINC, the test/train splits are provided.  
The QM9 dataset is randomly split into training,
validation, and test sets with a ratio of 0.8/0.1/0.1

\section{Runtime and compute} \label{sec:compute}
All experiments were run using Yale's Grace cluster. Runs used a single GPU each, with at least 11GB of RAM per GPU. The type of GPU varied between A100, V100, A5000, and RTX5000. For all runs using our main pipeline (comparison against baseline models, comparison against alternative pre-training targets, and ablation on MLP head), pre-training and fine-tuning on all downstream tasks took between 12 to 24 hours on ZINC-full, and 36 to 48 hours on QM9. For experiments enhancing an existing pre-training pipeline, the full pre-training process (including both unsupervised and supervised steps) took around 24 hours, and the downstream fine-tuning across all five datasets for a single seed and model variant took around 15 hours. 

Across all results that are included in this paper, we estimate a total of 325 GPU hours spent on the main-body experiments, and an additional 400 GPU hours spent on enhancing an existing GNN pre-training method. 

The complete experimental process consumed far more GPU hours, as we initially explored basic models' abilities to learn Laplacian eigenvectors, and also experimented with a variety of augmentation features before deciding on the two included in the paper. We estimate an additional 800 GPU hours used for initial experimentation and exploration.

\section{Visualization of predicted eigenvectors during pre-training}

\begin{figure}
    \centering
    \includegraphics[width=0.6\linewidth]{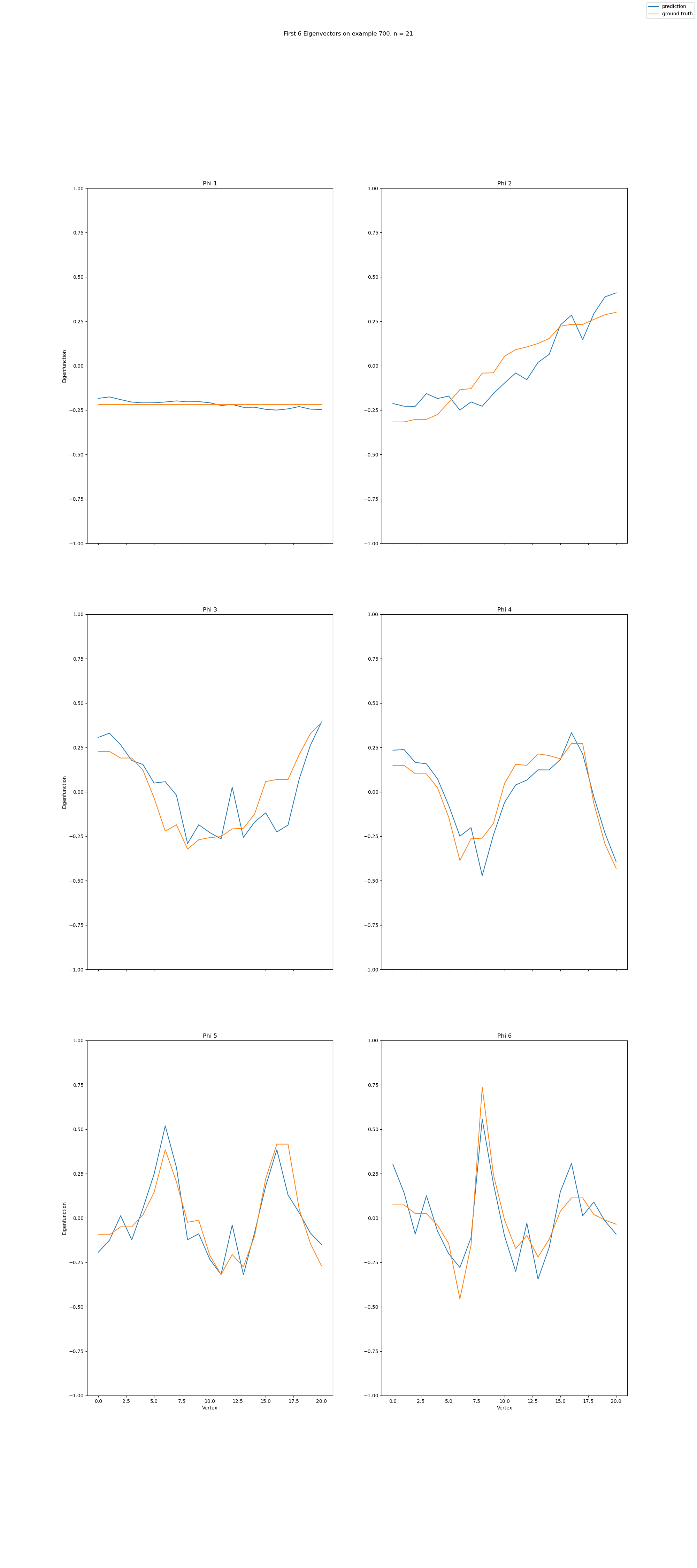}
    \caption{A comparison of predicted eigenvectors (blue) with ground-truth eigenvectors (orange) for a single molecular graph with $n = 21$ nodes. These predictions were produced by the standard GIN model on a validation example from the ZINC dataset. Node indices are sorted in increasing order of $\psi_2(i)$, and the sign orientation of the predicted vectors $\hat{u}_i$ is chosen such that $\hat{u}_i \cdot \psi_i\geq 0$. }
    \label{fig:placeholder}
\end{figure}

\newpage
\section*{TAG-DS Paper Checklist}

\begin{enumerate}

\item {\bf Claims}
    \item[] Question: Do the main claims made in the abstract and introduction accurately reflect the paper's contributions and scope?
    \item[] Answer: \answerYes{} 
    \item[] Justification: The claims of the abstract and introduction are that we developed a novel GNN pre-training method that improves performance on structure-based tasks.  The empirical results demonstrate the improvements, and the theoretical results justify the design of the framework.
    \item[] Guidelines:
    \begin{itemize}
        \item The answer NA means that the abstract and introduction do not include the claims made in the paper.
        \item The abstract and/or introduction should clearly state the claims made, including the contributions made in the paper and important assumptions and limitations. A No or NA answer to this question will not be perceived well by the reviewers. 
        \item The claims made should match theoretical and experimental results, and reflect how much the results can be expected to generalize to other settings. 
        \item It is fine to include aspirational goals as motivation as long as it is clear that these goals are not attained by the paper. 
    \end{itemize}

\item {\bf Limitations}
    \item[] Question: Does the paper discuss the limitations of the work performed by the authors?
    \item[] Answer: \answerYes{} 
    \item[] Justification: The paper includes a limitations section, section  \ref{ref: limitations}, that discusses the question of whether the framework is suitable for transfer learning.
    \item[] Guidelines:
    \begin{itemize}
        \item The answer NA means that the paper has no limitation while the answer No means that the paper has limitations, but those are not discussed in the paper. 
        \item The authors are encouraged to create a separate "Limitations" section in their paper.
        \item The paper should point out any strong assumptions and how robust the results are to violations of these assumptions (e.g., independence assumptions, noiseless settings, model well-specification, asymptotic approximations only holding locally). The authors should reflect on how these assumptions might be violated in practice and what the implications would be.
        \item The authors should reflect on the scope of the claims made, e.g., if the approach was only tested on a few datasets or with a few runs. In general, empirical results often depend on implicit assumptions, which should be articulated.
        \item The authors should reflect on the factors that influence the performance of the approach. For example, a facial recognition algorithm may perform poorly when image resolution is low or images are taken in low lighting. Or a speech-to-text system might not be used reliably to provide closed captions for online lectures because it fails to handle technical jargon.
        \item The authors should discuss the computational efficiency of the proposed algorithms and how they scale with dataset size.
        \item If applicable, the authors should discuss possible limitations of their approach to address problems of privacy and fairness.
        \item While the authors might fear that complete honesty about limitations might be used by reviewers as grounds for rejection, a worse outcome might be that reviewers discover limitations that aren't acknowledged in the paper. The authors should use their best judgment and recognize that individual actions in favor of transparency play an important role in developing norms that preserve the integrity of the community. Reviewers will be specifically instructed to not penalize honesty concerning limitations.
    \end{itemize}

\item {\bf Theory assumptions and proofs}
    \item[] Question: For each theoretical result, does the paper provide the full set of assumptions and a complete (and correct) proof?
    \item[] Answer: \answerYes{} 
    \item[] Justification: All proofs in the paper include the full set of assumptions and all proofs are complete and correct.  Proofs are included in sections \ref{sec: lossfn proofs} and \ref{sec: NodeFeatures}
    \item[] Guidelines:
    \begin{itemize}
        \item The answer NA means that the paper does not include theoretical results. 
        \item All the theorems, formulas, and proofs in the paper should be numbered and cross-referenced.
        \item All assumptions should be clearly stated or referenced in the statement of any theorems.
        \item The proofs can either appear in the main paper or the supplemental material, but if they appear in the supplemental material, the authors are encouraged to provide a short proof sketch to provide intuition. 
        \item Inversely, any informal proof provided in the core of the paper should be complemented by formal proofs provided in appendix or supplemental material.
        \item Theorems and Lemmas that the proof relies upon should be properly referenced. 
    \end{itemize}

    \item {\bf Experimental result reproducibility}
    \item[] Question: Does the paper fully disclose all the information needed to reproduce the main experimental results of the paper to the extent that it affects the main claims and/or conclusions of the paper (regardless of whether the code and data are provided or not)?
    \item[] Answer: \answerYes{} 
    \item[] Justification: Code is provided in supplemental material and data is publicly accessible.  Moreover, all experimental settings are provided in supplemental material as well for ease of reproducibility.
    \item[] Guidelines:
    \begin{itemize}
        \item The answer NA means that the paper does not include experiments.
        \item If the paper includes experiments, a No answer to this question will not be perceived well by the reviewers: Making the paper reproducible is important, regardless of whether the code and data are provided or not.
        \item If the contribution is a dataset and/or model, the authors should describe the steps taken to make their results reproducible or verifiable. 
        \item Depending on the contribution, reproducibility can be accomplished in various ways. For example, if the contribution is a novel architecture, describing the architecture fully might suffice, or if the contribution is a specific model and empirical evaluation, it may be necessary to either make it possible for others to replicate the model with the same dataset, or provide access to the model. In general. releasing code and data is often one good way to accomplish this, but reproducibility can also be provided via detailed instructions for how to replicate the results, access to a hosted model (e.g., in the case of a large language model), releasing of a model checkpoint, or other means that are appropriate to the research performed.
        \item While NeurIPS does not require releasing code, the conference does require all submissions to provide some reasonable avenue for reproducibility, which may depend on the nature of the contribution. For example
        \begin{enumerate}
            \item If the contribution is primarily a new algorithm, the paper should make it clear how to reproduce that algorithm.
            \item If the contribution is primarily a new model architecture, the paper should describe the architecture clearly and fully.
            \item If the contribution is a new model (e.g., a large language model), then there should either be a way to access this model for reproducing the results or a way to reproduce the model (e.g., with an open-source dataset or instructions for how to construct the dataset).
            \item We recognize that reproducibility may be tricky in some cases, in which case authors are welcome to describe the particular way they provide for reproducibility. In the case of closed-source models, it may be that access to the model is limited in some way (e.g., to registered users), but it should be possible for other researchers to have some path to reproducing or verifying the results.
        \end{enumerate}
    \end{itemize}

\item {\bf Open access to data and code}
    \item[] Question: Does the paper provide open access to the data and code, with sufficient instructions to faithfully reproduce the main experimental results, as described in supplemental material?
    \item[] Answer: \answerYes{} 
    \item[] Justification: Code is provided as part of supplementary material and all data is publicly accessible.
    \item[] Guidelines:
    \begin{itemize}
        \item The answer NA means that paper does not include experiments requiring code.
        \item Please see the NeurIPS code and data submission guidelines (\url{https://nips.cc/public/guides/CodeSubmissionPolicy}) for more details.
        \item While we encourage the release of code and data, we understand that this might not be possible, so “No” is an acceptable answer. Papers cannot be rejected simply for not including code, unless this is central to the contribution (e.g., for a new open-source benchmark).
        \item The instructions should contain the exact command and environment needed to run to reproduce the results. See the NeurIPS code and data submission guidelines (\url{https://nips.cc/public/guides/CodeSubmissionPolicy}) for more details.
        \item The authors should provide instructions on data access and preparation, including how to access the raw data, preprocessed data, intermediate data, and generated data, etc.
        \item The authors should provide scripts to reproduce all experimental results for the new proposed method and baselines. If only a subset of experiments are reproducible, they should state which ones are omitted from the script and why.
        \item At submission time, to preserve anonymity, the authors should release anonymized versions (if applicable).
        \item Providing as much information as possible in supplemental material (appended to the paper) is recommended, but including URLs to data and code is permitted.
    \end{itemize}

\item {\bf Experimental setting/details}
    \item[] Question: Does the paper specify all the training and test details (e.g., data splits, hyperparameters, how they were chosen, type of optimizer, etc.) necessary to understand the results?
    \item[] Answer: \answerYes{} 
    \item[] Justification: All experimental details, including data splits, hyperparameters, optimizer, network depth, etc., are provided in the supplemental material in section \ref{sec: Detailed}.
    \item[] Guidelines:
    \begin{itemize}
        \item The answer NA means that the paper does not include experiments.
        \item The experimental setting should be presented in the core of the paper to a level of detail that is necessary to appreciate the results and make sense of them.
        \item The full details can be provided either with the code, in appendix, or as supplemental material.
    \end{itemize}

\item {\bf Experiment statistical significance}
    \item[] Question: Does the paper report error bars suitably and correctly defined or other appropriate information about the statistical significance of the experiments?
    \item[] Answer: \answerNo{} 
    \item[] Justification: Error bars are not reported because it would be too computationally
expensive.
    \item[] Guidelines:
    \begin{itemize}
        \item The answer NA means that the paper does not include experiments.
        \item The authors should answer "Yes" if the results are accompanied by error bars, confidence intervals, or statistical significance tests, at least for the experiments that support the main claims of the paper.
        \item The factors of variability that the error bars are capturing should be clearly stated (for example, train/test split, initialization, random drawing of some parameter, or overall run with given experimental conditions).
        \item The method for calculating the error bars should be explained (closed form formula, call to a library function, bootstrap, etc.)
        \item The assumptions made should be given (e.g., Normally distributed errors).
        \item It should be clear whether the error bar is the standard deviation or the standard error of the mean.
        \item It is OK to report 1-sigma error bars, but one should state it. The authors should preferably report a 2-sigma error bar than state that they have a 96\% CI, if the hypothesis of Normality of errors is not verified.
        \item For asymmetric distributions, the authors should be careful not to show in tables or figures symmetric error bars that would yield results that are out of range (e.g. negative error rates).
        \item If error bars are reported in tables or plots, The authors should explain in the text how they were calculated and reference the corresponding figures or tables in the text.
    \end{itemize}

\item {\bf Experiments compute resources}
    \item[] Question: For each experiment, does the paper provide sufficient information on the computer resources (type of compute workers, memory, time of execution) needed to reproduce the experiments?
    \item[] Answer: \answerYes{} 
    \item[] Justification: We provide a complete overview of compute resources used and time of execution in the appendix in section \ref{sec:compute}.
    \item[] Guidelines:
    \begin{itemize}
        \item The answer NA means that the paper does not include experiments.
        \item The paper should indicate the type of compute workers CPU or GPU, internal cluster, or cloud provider, including relevant memory and storage.
        \item The paper should provide the amount of compute required for each of the individual experimental runs as well as estimate the total compute. 
        \item The paper should disclose whether the full research project required more compute than the experiments reported in the paper (e.g., preliminary or failed experiments that didn't make it into the paper). 
    \end{itemize}
    
\item {\bf Code of ethics}
    \item[] Question: Does the research conducted in the paper conform, in every respect, with the NeurIPS Code of Ethics \url{https://neurips.cc/public/EthicsGuidelines}?
    \item[] Answer: \answerYes{} 
    \item[] Justification: Paper does not violate any part of Neurips Code of Ethics.
    \item[] Guidelines:
    \begin{itemize}
        \item The answer NA means that the authors have not reviewed the NeurIPS Code of Ethics.
        \item If the authors answer No, they should explain the special circumstances that require a deviation from the Code of Ethics.
        \item The authors should make sure to preserve anonymity (e.g., if there is a special consideration due to laws or regulations in their jurisdiction).
    \end{itemize}

\item {\bf Broader impacts}
    \item[] Question: Does the paper discuss both potential positive societal impacts and negative societal impacts of the work performed?
    \item[] Answer: \answerNA{} 
    \item[] Justification: The framework developed in this paper is not tied to any particular application.
    \item[] Guidelines:
    \begin{itemize}
        \item The answer NA means that there is no societal impact of the work performed.
        \item If the authors answer NA or No, they should explain why their work has no societal impact or why the paper does not address societal impact.
        \item Examples of negative societal impacts include potential malicious or unintended uses (e.g., disinformation, generating fake profiles, surveillance), fairness considerations (e.g., deployment of technologies that could make decisions that unfairly impact specific groups), privacy considerations, and security considerations.
        \item The conference expects that many papers will be foundational research and not tied to particular applications, let alone deployments. However, if there is a direct path to any negative applications, the authors should point it out. For example, it is legitimate to point out that an improvement in the quality of generative models could be used to generate deepfakes for disinformation. On the other hand, it is not needed to point out that a generic algorithm for optimizing neural networks could enable people to train models that generate Deepfakes faster.
        \item The authors should consider possible harms that could arise when the technology is being used as intended and functioning correctly, harms that could arise when the technology is being used as intended but gives incorrect results, and harms following from (intentional or unintentional) misuse of the technology.
        \item If there are negative societal impacts, the authors could also discuss possible mitigation strategies (e.g., gated release of models, providing defenses in addition to attacks, mechanisms for monitoring misuse, mechanisms to monitor how a system learns from feedback over time, improving the efficiency and accessibility of ML).
    \end{itemize}
    
\item {\bf Safeguards}
    \item[] Question: Does the paper describe safeguards that have been put in place for responsible release of data or models that have a high risk for misuse (e.g., pretrained language models, image generators, or scraped datasets)?
    \item[] Answer: \answerNA{} 
    \item[] Justification: All models are trained publicly accessible molecular property prediction datasets that pose no safety risks.
    \item[] Guidelines:
    \begin{itemize}
        \item The answer NA means that the paper poses no such risks.
        \item Released models that have a high risk for misuse or dual-use should be released with necessary safeguards to allow for controlled use of the model, for example by requiring that users adhere to usage guidelines or restrictions to access the model or implementing safety filters. 
        \item Datasets that have been scraped from the Internet could pose safety risks. The authors should describe how they avoided releasing unsafe images.
        \item We recognize that providing effective safeguards is challenging, and many papers do not require this, but we encourage authors to take this into account and make a best faith effort.
    \end{itemize}

\item {\bf Licenses for existing assets}
    \item[] Question: Are the creators or original owners of assets (e.g., code, data, models), used in the paper, properly credited and are the license and terms of use explicitly mentioned and properly respected?
    \item[] Answer: \answerNA{} 
    \item[] Justification:The paper does not use existing assets. 
    \item[] Guidelines:
    \begin{itemize}
        \item The answer NA means that the paper does not use existing assets.
        \item The authors should cite the original paper that produced the code package or dataset.
        \item The authors should state which version of the asset is used and, if possible, include a URL.
        \item The name of the license (e.g., CC-BY 4.0) should be included for each asset.
        \item For scraped data from a particular source (e.g., website), the copyright and terms of service of that source should be provided.
        \item If assets are released, the license, copyright information, and terms of use in the package should be provided. For popular datasets, \url{paperswithcode.com/datasets} has curated licenses for some datasets. Their licensing guide can help determine the license of a dataset.
        \item For existing datasets that are re-packaged, both the original license and the license of the derived asset (if it has changed) should be provided.
        \item If this information is not available online, the authors are encouraged to reach out to the asset's creators.
    \end{itemize}

\item {\bf New assets}
    \item[] Question: Are new assets introduced in the paper well documented and is the documentation provided alongside the assets?
    \item[] Answer: \answerNA{} 
    \item[] Justification: The paper does not release new assets.
    \item[] Guidelines:
    \begin{itemize}
        \item The answer NA means that the paper does not release new assets.
        \item Researchers should communicate the details of the dataset/code/model as part of their submissions via structured templates. This includes details about training, license, limitations, etc. 
        \item The paper should discuss whether and how consent was obtained from people whose asset is used.
        \item At submission time, remember to anonymize your assets (if applicable). You can either create an anonymized URL or include an anonymized zip file.
    \end{itemize}

\item {\bf Crowdsourcing and research with human subjects}
    \item[] Question: For crowdsourcing experiments and research with human subjects, does the paper include the full text of instructions given to participants and screenshots, if applicable, as well as details about compensation (if any)? 
    \item[] Answer: \answerNA{} 
    \item[] Justification: The paper does not involve crowdsourcing nor research with human subjects.
    \item[] Guidelines:
    \begin{itemize}
        \item The answer NA means that the paper does not involve crowdsourcing nor research with human subjects.
        \item Including this information in the supplemental material is fine, but if the main contribution of the paper involves human subjects, then as much detail as possible should be included in the main paper. 
        \item According to the NeurIPS Code of Ethics, workers involved in data collection, curation, or other labor should be paid at least the minimum wage in the country of the data collector. 
    \end{itemize}

\item {\bf Institutional review board (IRB) approvals or equivalent for research with human subjects}
    \item[] Question: Does the paper describe potential risks incurred by study participants, whether such risks were disclosed to the subjects, and whether Institutional Review Board (IRB) approvals (or an equivalent approval/review based on the requirements of your country or institution) were obtained?
    \item[] Answer: \answerNA{} 
    \item[] Justification: The paper does not involve crowdsourcing nor research with human subjects.
    \item[] Guidelines:
    \begin{itemize}
        \item The answer NA means that the paper does not involve crowdsourcing nor research with human subjects.
        \item Depending on the country in which research is conducted, IRB approval (or equivalent) may be required for any human subjects research. If you obtained IRB approval, you should clearly state this in the paper. 
        \item We recognize that the procedures for this may vary significantly between institutions and locations, and we expect authors to adhere to the NeurIPS Code of Ethics and the guidelines for their institution. 
        \item For initial submissions, do not include any information that would break anonymity (if applicable), such as the institution conducting the review.
    \end{itemize}

\item {\bf Declaration of LLM usage}
    \item[] Question: Does the paper describe the usage of LLMs if it is an important, original, or non-standard component of the core methods in this research? Note that if the LLM is used only for writing, editing, or formatting purposes and does not impact the core methodology, scientific rigorousness, or originality of the research, declaration is not required.
    \item[] Answer: \answerNA{} 
    \item[] Justification: LLMs were not used for core methods in this research.  The only use of LLMs was for checking grammar in the paper.
    \item[] Guidelines:
    \begin{itemize}
        \item The answer NA means that the core method development in this research does not involve LLMs as any important, original, or non-standard components.
        \item Please refer to our LLM policy (\url{https://neurips.cc/Conferences/2025/LLM}) for what should or should not be described.
    \end{itemize}

\end{enumerate}

\pagebreak
\end{document}